\documentclass[11pt]{article}

\usepackage[preprint]{acl}

\usepackage{times}
\usepackage{latexsym}
\usepackage{amsmath}
\usepackage{amssymb}
\usepackage{booktabs}
\usepackage{multirow}

\usepackage{amsthm}
\usepackage{mathtools}
\usepackage{thmtools}
\usepackage{hyperref}
\usepackage{natbib}
\usepackage{enumitem}
\usepackage{xcolor}
\usepackage{algorithm}
\usepackage{algpseudocode}

\newtheorem{theorem}{Theorem}
\newtheorem{proposition}[theorem]{Proposition}
\newtheorem{corollary}[theorem]{Corollary}
\newtheorem{lemma}[theorem]{Lemma}
\newtheorem{definition}[theorem]{Definition}

\newtheorem{remark}{Remark}

\newcommand{\cY}{\mathcal{Y}}
\newcommand{\cA}{\mathcal{A}}
\newcommand{\R}{\mathbb{R}}

\newcommand{\supp}{\mathrm{supp}}

\newcommand{\proj}{\Pi}
\newcommand{\ip}[2]{\langle #1, #2 \rangle}

\newcommand{\ind}{\mathbb{1}}

\usepackage[T1]{fontenc}

\usepackage[utf8]{inputenc}
\usepackage[autostyle=false, style=english]{csquotes}
\MakeOuterQuote{"}

\usepackage{microtype}

\usepackage{inconsolata}

\usepackage{graphicx}

\title{Knowledge Divergence and the Value of Debate for Scalable Oversight}

\author{Robin Young \\
  Department of Computer Science and Technology \\
  University of Cambridge \\
  Cambridge, UK \\
  \texttt{robin.young@cl.cam.ac.uk}}

\begin{document}
\maketitle

\begin{abstract}
AI safety via debate and reinforcement learning from AI feedback (RLAIF) are both proposed methods for scalable oversight of advanced AI systems, yet no formal framework relates them or characterizes when debate offers an advantage. We analyze this by parameterizing debate's value through the geometry of knowledge divergence between debating models. Using principal angles between models' representation subspaces, we prove that the debate advantage admits an exact closed form. When models share identical training corpora, debate reduces to RLAIF-like where a single-agent method recovers the same optimum. When models possess divergent knowledge, debate advantage scales with a phase transition from quadratic regime (debate offers negligible benefit) to linear regime (debate is essential). We classify three regimes of knowledge divergence (shared, one-sided, and compositional) and provide existence results showing that debate can achieve outcomes inaccessible to either model alone, alongside a negative result showing that sufficiently strong adversarial incentives cause coordination failure in the compositional regime, with a sharp threshold separating effective from ineffective debate. We offer the first formal connection between debate and RLAIF, a geometric foundation for understanding when adversarial oversight protocols are justified, and connection to the problem of eliciting latent knowledge across models with complementary information.
\end{abstract}

\section{Introduction}
\label{sec:intro}

Scalable oversight, namely the problem of supervising AI systems on tasks too complex for direct human evaluation, has motivated two prominent families of approaches. AI safety via debate \citep{irving2018aisafetydebate, brown-cohen2024scalable} proposes pitting two AI models against each other in a structured argument, with a human judge evaluating the transcript. Reinforcement learning from AI feedback (RLAIF), as implemented in Constitutional AI \citep{bai2022constitutional}, trains models to self-critique against a set of principles. Both approaches aim to amplify a weak overseer's ability to evaluate complex model outputs.

Despite their shared motivation, these methods have developed in isolation. The debate literature, rooted in computational complexity theory, studies the power of interactive proof systems with polynomial-time verifiers \citep{brown-cohen2024scalable,browncohen2025proverestimatordebate}. The RLAIF literature, rooted in preference learning, studies how constitutional principles shape model distributions. No formal framework relates these approaches or characterizes when one offers an advantage over the other.

We analyze this question. Under our assumptions, our central observation is that the value of debate depends on the knowledge divergence between the debating models, which is a quantity that existing debate theory, which treats provers as abstract computational agents, does not formalize. Using principal angles between models' representation subspaces, we derive a bound on the debate advantage as the improvement in constitutional score achievable through adversarial multi-model interaction over single-model optimization.

Our contributions are as follows. First, we prove that the debate advantage admits the exact closed form $\Delta = \sqrt{(K_A^*)^2 + \eta^2} - K_A^*$ and satisfies tight bounds governed by a private information value $\eta$ derived from the principal angle spectrum between models' representation subspaces (Theorem~\ref{thm:main-bound}). Same-corpus equivalence falls out immediately: when models share training data, $\eta = 0$ and debate reduces to RLAIF (Corollary~\ref{cor:same-corpus}). Then, we classify three regimes of knowledge divergence (shared, one-sided, and compositional) with existence results showing debate can achieve outcomes inaccessible to either model alone (Propositions~\ref{prop:one-sided} and~\ref{prop:compositional}), and a negative result showing that adversarial incentives above a sharp threshold cause coordination failure in the compositional regime (Proposition~\ref{prop:limits}). 

We provide a theoretical explanation for the empirical finding that model homogeneity undermines AI oversight \citep{goel2025great}, predicts that debate's value scales with knowledge diversity, and suggests that the interesting regime for debate which concerns knowledge-divergent models is the regime that is currently understudied.

\section{Geometric Framework}
\label{sec:framework}

\subsection{Setup and Notation}

Let $\cY$ be a finite output space and let $h: \cY \to \R^d$ be a representation map embedding outputs into a $d$-dimensional space. Two models $A$ and $B$, trained on (potentially different) corpora, induce $k$-dimensional representation subspaces $V_A, V_B \subseteq \R^d$. We assume $\dim(V_A) = \dim(V_B) = k$ for simplicity; the framework extends to unequal dimensions.

\begin{definition}[Principal Angles]
\label{def:principal-angles}
The principal angles $\theta_1, \ldots, \theta_k \in [0, \pi/2]$ between $V_A$ and $V_B$ are defined recursively by
\begin{equation}
\cos\theta_i = \max_{\substack{u \in V_A, \|u\|=1 \\ u \perp u_1, \ldots, u_{i-1}}} \max_{\substack{v \in V_B, \|v\|=1 \\ v \perp v_1, \ldots, v_{i-1}}} \ip{u}{v},
\end{equation}
where $(u_i, v_i)$ are the corresponding \emph{principal vectors} satisfying $\ip{u_i}{v_j} = \cos\theta_i \cdot \delta_{ij}$.
\end{definition}

The principal angles provide a complete characterization of the relative geometry of the two subspaces. When $\theta_i = 0$ for all $i$, the subspaces are identical; when $\theta_i = \pi/2$ for all $i$, they are orthogonal.

We model the constitutional scoring function as a linear functional in representation space.

\begin{definition}[Linear Constitutional Scoring]
\label{def:scoring}
A constitutional scoring function is $K(y) = \ip{w}{h(y)}$ for a fixed preference direction $w \in \R^d$ with $\|w\| = 1$.
\end{definition}

The linear assumption is standard in the representation geometry literature \citep{park2024linear,arditi2024refusal} and permits a clean geometric analysis. We discuss relaxation to nonlinear $K$ in Section~\ref{sec:limitations}.

\subsection{RLAIF and Debate as Optimization}

Under linear scoring, each model's RLAIF-optimal constitutional score is determined by how much of the preference direction $w$ its representation subspace captures:
\begin{equation}
K_A^* = \|\proj_{V_A} w\|, \qquad K_B^* = \|\proj_{V_B} w\|,
\end{equation}
where $\proj_{V}$ denotes orthogonal projection onto $V$.

Debate between models $A$ and $B$ pools their representational knowledge. At equilibrium, debate can access any output representable in either model's subspace, so the debate-optimal score is:
\begin{equation}
K_{AB}^* = \|\proj_{V_A + V_B} w\|,
\end{equation}
where $V_A + V_B = \{u + v : u \in V_A, v \in V_B\}$ is the Minkowski sum.

\begin{definition}[Debate Advantage]
\label{def:debate-advantage}
The \emph{debate advantage} is
\begin{equation}
\Delta(V_A, V_B, w) = K_{AB}^* - \max(K_A^*, K_B^*).
\end{equation}
\end{definition}

The debate advantage measures the improvement in constitutional score achievable by pooling two models' knowledge through adversarial interaction, over the best that either model achieves alone.

\subsection{Decomposition via Principal Vectors}

We decompose the combined subspace $V_A + V_B$ to isolate the contribution of private knowledge. For each principal angle $\theta_i > 0$, define the private direction:
\begin{equation}
\label{eq:private-direction}
\tilde{v}_i = \frac{v_i - \cos\theta_i \cdot u_i}{\sin\theta_i}.
\end{equation}
These are the components of $V_B$'s principal vectors orthogonal to $V_A$, normalized.

\begin{lemma}
\label{lem:orthonormal}
The set $\{u_1, \ldots, u_k, \tilde{v}_1, \ldots, \tilde{v}_m\}$, where $m = |\{i : \theta_i > 0\}|$, forms an orthonormal basis for $V_A + V_B$.
\end{lemma}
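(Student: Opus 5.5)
The plan is to establish three facts in turn: the listed vectors lie in $V_A+V_B$, they are orthonormal, and they span $V_A+V_B$. Throughout I use the standard properties of principal vectors implicit in Definition~\ref{def:principal-angles}. Both $\{u_1,\dots,u_k\}$ and $\{v_1,\dots,v_k\}$ are orthonormal bases of $V_A$ and $V_B$ respectively, and the biorthogonality relation $\ip{u_i}{v_j}=\cos\theta_i\,\delta_{ij}$ holds. If one wants these from scratch, they follow from the SVD of $U^\top V$, where $U,V$ are orthonormal basis matrices of $V_A,V_B$.

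\textbf{Membership.} Each $u_i\in V_A$, and each $\tilde v_i$ is a linear combination of $v_i\in V_B$ and $u_i\in V_A$, so it lies in $V_A+V_B$. The definition \eqref{eq:private-direction} makes sense because $\sin\theta_i>0$ for the indices used.

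\textbf{Orthonormality.} This is a short computation from biorthogonality.
\begin{itemize}[nosep]
\item Against the $u$'s: $\ip{u_j}{v_i-\cos\theta_i u_i}=\cos\theta_i\delta_{ij}-\cos\theta_i\delta_{ij}=0$, so $\tilde v_i\perp V_A$.
\item Among the private directions: for $i,j$ with $\theta_i,\theta_j>0$,
\[
\ip{v_i-\cos\theta_i u_i}{v_j-\cos\theta_j u_j}=\delta_{ij}-2\cos^2\theta_i\,\delta_{ij}+\cos^2\theta_i\,\delta_{ij}=\sin^2\theta_i\,\delta_{ij}.
\]
Here I use $\ip{v_i}{v_j}=\delta_{ij}$ and $\ip{u_i}{u_j}=\delta_{ij}$.
\item Dividing by $\sin\theta_i\sin\theta_j$ gives $\ip{\tilde v_i}{\tilde v_j}=\delta_{ij}$.
\end{itemize}

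\textbf{Spanning.} $V_A+V_B$ is spanned by $u_1,\dots,u_k,v_1,\dots,v_k$.
\begin{itemize}[nosep]
\item If $\theta_i>0$, then $v_i=\cos\theta_i u_i+\sin\theta_i\tilde v_i$ lies in the span of the proposed set.
\item If $\theta_i=0$, then $\ip{u_i}{v_i}=1$ with both vectors unit, so by the equality case of Cauchy--Schwarz $v_i=u_i$.
\end{itemize}
Hence every generator lies in the span, and together with orthonormality the set is a basis. As a consistency check, $\dim(V_A+V_B)=2k-\dim(V_A\cap V_B)=k+m$, since $V_A\cap V_B$ is spanned by the $u_i$ with $\theta_i=0$.

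\textbf{Main obstacle.} The only delicate point is justifying that the principal vectors $\{v_i\}$ form an orthonormal basis of $V_B$ satisfying full biorthogonality, including $\ip{u_i}{v_j}=0$ for $i\neq j$. This requires all $k$ angles to be defined, which the recursive definition guarantees since $\dim V_A=\dim V_B=k$. The paper's definition asserts this relation, so I take it as given and note the SVD justification in a remark. The $\theta_i=0$ case in the spanning step is also where care is needed, and the Cauchy--Schwarz argument settles it.
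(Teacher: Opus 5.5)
Your proof is correct and follows the same route as the paper: orthonormality by direct expansion using the biorthogonality relation $\ip{u_i}{v_j}=\cos\theta_i\,\delta_{ij}$, and spanning via $v_i=\cos\theta_i\,u_i+\sin\theta_i\,\tilde v_i$. You also treat the $\theta_i=0$ case explicitly, where $\tilde v_i$ is undefined and Cauchy--Schwarz gives $v_i=u_i$; the paper's spanning step skips this, so your version is slightly more complete.
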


\begin{proof}
That each $\tilde{v}_i$ has unit norm follows from $\|v_i - \cos\theta_i \cdot u_i\|^2 = 1 - \cos^2\theta_i = \sin^2\theta_i$. 
For orthogonality, $\ip{\tilde{v}_i}{u_j} = (\ip{v_i}{u_j} - \cos\theta_i \cdot \delta_{ij})/\sin\theta_i = 0$ by the principal angle property.
For $i \neq j$:
\begin{align}
\ip{\tilde{v}_i}{\tilde{v}_j} &= \frac{1}{\sin\theta_i \sin\theta_j}\bigl(\ip{v_i}{v_j} - \cos\theta_j \ip{v_i}{u_j} \notag\\
&\quad - \cos\theta_i\ip{u_i}{v_j} + \cos\theta_i\cos\theta_j\ip{u_i}{u_j}\bigr) \\
&= \frac{\delta_{ij} - 0 - 0 + 0}{\sin\theta_i\sin\theta_j} = \delta_{ij}.
\end{align}
Spanning follows because each $v_i = \cos\theta_i \cdot u_i + \sin\theta_i \cdot \tilde{v}_i \in \mathrm{span}\{u_i, \tilde{v}_i\}$.
\end{proof}

Using this basis, we decompose the projection of $w$ onto $V_A + V_B$:
\begin{equation}
\|\proj_{V_A + V_B} w\|^2 = \underbrace{\sum_{i=1}^k \ip{w}{u_i}^2}_{= \|\proj_{V_A} w\|^2} + \underbrace{\sum_{i:\theta_i > 0} \ip{w}{\tilde{v}_i}^2}_{=: \eta^2}.
\end{equation}

\begin{definition}[Private Information Value]
\label{def:piv}
The private information value of model $B$ relative to model $A$ under preference direction $w$ is
\begin{equation}
\eta(V_A, V_B, w) = \sqrt{\sum_{i:\theta_i > 0} \ip{w}{\tilde{v}_i}^2\,},
\end{equation}
where $\ip{w}{\tilde{v}_i} = (\ip{w}{v_i} - \cos\theta_i \cdot \ip{w}{u_i})/\sin\theta_i$.
\end{definition}

The private information value measures $K$-relevant information residing in directions $V_B$ contributes beyond $V_A$. The numerator captures component of $w$ along $v_i$ that is not already explained by $v_i$'s projection onto $u_i$; the denominator normalizes by the geometric novelty of each direction.

\subsection{Main Result}

\begin{theorem}[Debate Advantage Bound]
\label{thm:main-bound}
Under linear constitutional scoring $K(y) = \ip{w}{h(y)}$, and assuming without loss of generality that $K_A^* \geq K_B^*$, debate advantage admits the exact closed form
\begin{equation}
\label{eq:exact-delta}
\Delta = \sqrt{(K_A^*)^2 + \eta^2} - K_A^*,
\end{equation}
and satisfies the bounds
\begin{equation}
\label{eq:bounds}
\frac{\eta^2}{2K_A^* + \eta} \leq \Delta \leq \eta.
\end{equation}
Both bounds are tight. the lower bound is achieved as $\eta \to 0$; the upper bound is achieved when $K_A^* = 0$.
\end{theorem}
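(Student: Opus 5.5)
The exact closed form follows from the orthonormal decomposition of $V_A+V_B$ given by Lemma~\ref{lem:orthonormal}. Since $\{u_1,\dots,u_k,\tilde v_1,\dots,\tilde v_m\}$ is an orthonormal basis of $V_A+V_B$, Parseval gives
\begin{equation*}
\|\proj_{V_A+V_B} w\|^2 = \sum_{i=1}^k \ip{w}{u_i}^2 + \sum_{i:\theta_i>0}\ip{w}{\tilde v_i}^2 .
\end{equation*}
The $u_i$ are an orthonormal basis of $V_A$, so the first sum is $(K_A^*)^2$. The second sum is $\eta^2$ by Definition~\ref{def:piv}. Hence $K_{AB}^* = \sqrt{(K_A^*)^2+\eta^2}$. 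Under the convention $K_A^*\ge K_B^*$, we have $\max(K_A^*,K_B^*)=K_A^*$, and Definition~\ref{def:debate-advantage} yields \eqref{eq:exact-delta} directly.

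For the bounds, I would rationalize:
\begin{equation*}
\Delta = \frac{\eta^2}{\sqrt{(K_A^*)^2+\eta^2}+K_A^*}.
\end{equation*}
The upper bound uses $\sqrt{(K_A^*)^2+\eta^2}\le K_A^*+\eta$, which holds because both sides are nonnegative and the inequality is equivalent after squaring to $0\le 2K_A^*\eta$. This gives $\Delta\le\eta$, with equality exactly when $K_A^*\eta=0$, and in particular when $K_A^*=0$. The lower bound uses the same inequality in the denominator: $\sqrt{(K_A^*)^2+\eta^2}+K_A^*\le 2K_A^*+\eta$, so $\Delta\ge \eta^2/(2K_A^*+\eta)$. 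Equality again holds iff $K_A^*\eta=0$, which needs a convention at $\eta=0$: both sides then vanish, and if also $K_A^*=0$ the expression is $0/0$.

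For tightness of the lower bound, the phrase ``achieved as $\eta\to 0$'' cannot mean equality for $\eta>0$ when $K_A^*>0$. I would make it precise as the ratio of $\Delta$ to the bound tending to $1$:
\begin{equation*}
\frac{\Delta}{\eta^2/(2K_A^*+\eta)} = \frac{2K_A^*+\eta}{\sqrt{(K_A^*)^2+\eta^2}+K_A^*}\to 1 .
\end{equation*}
Both $\Delta$ and the bound behave like $\eta^2/(2K_A^*)$, so the bound captures the leading quadratic order. For the upper bound, I would exhibit a configuration with $K_A^*=0$ and $\eta>0$ to show it is attained and not vacuous. For example, take $w$ orthogonal to $V_A$ but with a nonzero component in $V_B$, which is possible whenever some $\theta_i>0$.

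The main obstacle is not algebraic. It is making the tightness claims precise: interpreting the lower-bound tightness asymptotically, and handling the degenerate case $K_A^*=\eta=0$. I would state the equality characterization $K_A^*\eta=0$ explicitly and handle that degenerate case separately.
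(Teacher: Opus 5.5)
Your derivation of the closed form and of both inequalities is the paper's own argument: Parseval over the basis of Lemma~\ref{lem:orthonormal}, rationalization to $\Delta=\eta^2/(K_{AB}^*+K_A^*)$, and $\sqrt{a^2+b^2}\le a+b$. Your ratio argument for the lower bound is the same asymptotic tightness the paper gives.

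The step that fails is your witness for the upper bound. Under the theorem's standing convention $K_A^*\ge K_B^*$, the condition $K_A^*=0$ forces $K_B^*=0$. Then $w\perp V_A$ and $w\perp V_B$, hence $w\perp V_A+V_B$ and $\eta=0$. Your configuration ($w\perp V_A$ with a nonzero component in $V_B$) has $K_B^*>0=K_A^*$, so it violates the hypothesis. In that configuration $\Delta=K_{AB}^*-K_B^*=\eta-K_B^*<\eta$. If you relabel so the stronger model plays the role of $A$, the closed form reads $\sqrt{(K_B^*)^2+(\eta')^2}-K_B^*$, which is strictly less than $\eta'$ whenever $\eta'>0$; here $\eta'$ is the private information of $A$ relative to $B$. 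So within the theorem, your equality condition $K_A^*\eta=0$ collapses to $\eta=0$. The upper bound is attained only in the degenerate form $\Delta=\eta=0$. That is also the only sense in which the paper's ``achieved when $K_A^*=0$'' holds.

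What you can prove instead is that the upper bound is sharp as a supremum. Since $\Delta/\eta=\eta/(K_{AB}^*+K_A^*)$, this ratio tends to $1$ as $K_A^*/\eta\to 0$, and that limit is realizable under the ordering. Take $d=2$, $k=1$, $V_A=\mathrm{span}\{e_1\}$, $V_B=\mathrm{span}\{(\cos\theta,\sin\theta)\}$ and $w=(-\sin(\theta/2),\cos(\theta/2))$. Then $K_A^*=K_B^*=\sin(\theta/2)$, $K_{AB}^*=1$ and $\eta=\cos(\theta/2)$. Hence $\Delta/\eta=(1-\sin(\theta/2))/\cos(\theta/2)\to 1$ as $\theta\to 0$. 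With this replacement for the upper-bound witness, and the case $K_A^*=\eta=0$ handled separately as you planned, the proposal is complete.
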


\begin{proof}
By Lemma~\ref{lem:orthonormal} and the decomposition of $\proj_{V_A + V_B} w$:
\begin{equation}
K_{AB}^* = \|\proj_{V_A+V_B} w\| = \sqrt{(K_A^*)^2 + \eta^2}.
\end{equation}
The closed form~\eqref{eq:exact-delta} follows immediately from Definition~\ref{def:debate-advantage}.

\emph{Upper bound.}
For $a, b \geq 0$, $\sqrt{a^2 + b^2} \leq a + b$, so $\Delta = \sqrt{(K_A^*)^2 + \eta^2} - K_A^* \leq \eta$.
Equality holds when $K_A^* = 0$, i.e. $w \perp V_A$.

\emph{Lower bound.}
Rationalizing the exact expression:
\begin{equation}
\Delta = \frac{(K_A^*)^2 + \eta^2 - (K_A^*)^2}{\sqrt{(K_A^*)^2 + \eta^2} + K_A^*} = \frac{\eta^2}{K_{AB}^* + K_A^*}.
\end{equation}
Since $K_{AB}^* = \sqrt{(K_A^*)^2 + \eta^2} \leq K_A^* + \eta$ (as $(K_A^*)^2 + \eta^2 \leq (K_A^* + \eta)^2$ for $K_A^*, \eta \geq 0$), we obtain
\begin{equation}
\Delta \geq \frac{\eta^2}{(K_A^* + \eta) + K_A^*} = \frac{\eta^2}{2K_A^* + \eta}.
\end{equation}
For tightness: as $\eta \to 0$, $\Delta = \eta^2/(2K_A^*) + O(\eta^4)$ while $\eta^2/(2K_A^* + \eta) = \eta^2/(2K_A^*) + O(\eta^3)$, so the bound is asymptotically tight.
\end{proof}

\begin{remark}[Scaling regimes]
\label{rem:scaling}
The rationalized form $\Delta = \eta^2/(K_{AB}^* + K_A^*)$ reveals two qualitative regimes:
\begin{itemize}
    \item \textbf{Small private information} ($\eta \ll K_A^*$): $K_{AB}^* \approx K_A^*$, so $\Delta \approx \eta^2 / 2K_A^*$.
    The debate advantage is quadratically small in the private information value.
    The overhead of running a multi-model debate protocol is unlikely to be justified.
    \item \textbf{Large private information} ($\eta \gg K_A^*$): $K_{AB}^* \approx \eta$, so $\Delta \approx \eta$.
    The debate advantage is linear in the private information value.
    Debate is essential as single-model optimization misses most of the achievable constitutional score.
\end{itemize}
The transition between regimes occurs at $\eta \approx K_A^*$, i.e. when the $K$-relevant private information is comparable to the $K$-relevant shared information.
\end{remark}

\subsection{Corollaries}

\begin{corollary}[Same-Corpus Equivalence]
\label{cor:same-corpus}
If $V_A = V_B$ (i.e. models share representation subspaces), then $\theta_i = 0$ for all $i$, $\eta = 0$, and $\Delta = 0$.
Debate and RLAIF achieve identical constitutional scores.
\end{corollary}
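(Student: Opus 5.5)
The plan is to chain three implications: $V_A = V_B$ forces every principal angle to vanish, vanishing angles force $\eta = 0$, and Theorem~\ref{thm:main-bound} then gives $\Delta = 0$.

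\emph{Step 1: all angles vanish.} Suppose $V_A = V_B$. At stage $i$ of the recursion in Definition~\ref{def:principal-angles}, choose $u$ to be any unit vector in $V_A \cap \{u_1,\dots,u_{i-1}\}^\perp$, and set $v = u$. By induction we may take $v_j = u_j$ for $j<i$. Then $v$ is a unit vector in $V_B$ orthogonal to $v_1,\dots,v_{i-1}$, so it is an admissible choice. This gives $\ip{u}{v} = 1$. Cauchy--Schwarz bounds the maximum by $1$, so $\cos\theta_i = 1$ and $\theta_i = 0$. Since this holds for every $i$, we get $m = |\{i : \theta_i > 0\}| = 0$.

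\emph{Step 2: $\eta = 0$.} With $m = 0$, the sum in Definition~\ref{def:piv} is empty, so $\eta = 0$. As a cross-check that does not rely on the principal-vector bookkeeping, Lemma~\ref{lem:orthonormal} says $\{u_1,\dots,u_k\}$ alone is an orthonormal basis of $V_A + V_B$. Directly, $V_A + V_B = V_A$, so $\|\proj_{V_A+V_B} w\|^2 = \|\proj_{V_A} w\|^2$. The decomposition preceding Definition~\ref{def:piv} then forces $\eta^2 = 0$.

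\emph{Step 3: conclude.} Also $K_A^* = K_B^*$, since the two projections coincide. The WLOG hypothesis of Theorem~\ref{thm:main-bound} therefore holds trivially. Substituting $\eta = 0$ into the exact form~\eqref{eq:exact-delta} gives $\Delta = \sqrt{(K_A^*)^2} - K_A^* = 0$, using $K_A^* \ge 0$. Equivalently, the upper bound in~\eqref{eq:bounds} gives $0 \le \Delta \le \eta = 0$, since $K_{AB}^* \ge K_A^*$ as $V_A \subseteq V_A+V_B$. Therefore $K_{AB}^* = \max(K_A^*, K_B^*)$, meaning debate and RLAIF attain the same constitutional score.

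There is no real obstacle. The only delicate point is Step 1, because the principal vectors are not unique when angles repeat. I would handle this by noting that the maximum value $\cos\theta_i = 1$ is attained whatever the earlier choices were. Alternatively, I would bypass angles entirely using the direct argument $V_A + V_B = V_A$ from Step 2.
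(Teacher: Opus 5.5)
Your proposal is correct and takes essentially the paper's approach: the paper's proof is your Step~2 cross-check, which observes that $V_A + V_B = V_A$, so there are no private directions, $\eta = 0$, and $K_{AB}^* = K_A^*$. Your extra derivation of $\theta_i = 0$ from the recursive definition, which the paper treats as immediate, is sound, and the non-uniqueness issue you flag goes away because $\ip{u_j}{v_j} = 1$ for unit vectors forces $v_j = u_j$ at every earlier stage.
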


\begin{proof}
When $\theta_i = 0$ for all $i$, there are no private directions: $V_A + V_B = V_A$.
Thus $\eta = 0$ and $K_{AB}^* = K_A^*$.
\end{proof}

\begin{remark}
Under our formalization RLAIF can be viewed as depth-1 debate under the same-corpus assumption. The model evaluates its own output against constitutional principles, which is a single round of self-debate. Corollary~\ref{cor:same-corpus} shows this is without loss of optimality when knowledge is shared.
\end{remark}

\begin{corollary}[Maximum Divergence]
\label{cor:max-divergence}
If $V_A \perp V_B$ (i.e. $\theta_i = \pi/2$ for all $i$), then $\tilde{v}_i = v_i$, $\eta = \|\proj_{V_B} w\| = K_B^*$, and $\Delta = \sqrt{(K_A^*)^2 + (K_B^*)^2} - K_A^*$.
\end{corollary}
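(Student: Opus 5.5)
The plan is to specialize the principal-vector machinery of Lemma~\ref{lem:orthonormal} and Definition~\ref{def:piv} to the case $\theta_i = \pi/2$ for all $i$, and then substitute into the closed form of Theorem~\ref{thm:main-bound}. There are three steps: identify the private directions, compute $\eta$, and read off $\Delta$.

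\emph{Step 1: the private directions.} Since $\cos\theta_i = 0$ and $\sin\theta_i = 1$ for every $i$, definition~\eqref{eq:private-direction} gives
\[
\tilde{v}_i = \frac{v_i - 0\cdot u_i}{1} = v_i .
\]
Every $\theta_i$ is positive, so $m = k$. All $k$ private directions are present and coincide with the principal vectors of $V_B$.

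\emph{Step 2: the value of $\eta$.} By Definition~\ref{def:piv},
\[
\eta^2 = \sum_{i=1}^k \ip{w}{v_i}^2 .
\]
I will argue that $\{v_1,\dots,v_k\}$ is an orthonormal basis of $V_B$. Orthonormality is either part of the recursive construction in Definition~\ref{def:principal-angles} or follows from the computation in Lemma~\ref{lem:orthonormal} with $\tilde v_i = v_i$. Since there are $k = \dim V_B$ such vectors, they span $V_B$. Hence the sum equals $\|\proj_{V_B} w\|^2 = (K_B^*)^2$, and $\eta = K_B^*$.

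This basis identification is the only point needing care. The hypothesis $V_A \perp V_B$ makes every pair orthogonal, so the principal vectors are not uniquely determined. I would remark that any orthonormal bases $\{u_i\}$ of $V_A$ and $\{v_i\}$ of $V_B$ satisfy $\ip{u_i}{v_j} = 0 = \cos\theta_i\,\delta_{ij}$. The choice therefore does not matter, since $\eta$ reduces to a basis-independent projection norm.

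\emph{Step 3: the debate advantage.} Substituting $\eta = K_B^*$ into~\eqref{eq:exact-delta} of Theorem~\ref{thm:main-bound}, under its convention $K_A^* \ge K_B^*$, gives
\[
\Delta = \sqrt{(K_A^*)^2 + (K_B^*)^2} - K_A^* .
\]
As a sanity check, one can verify this directly. Because $V_A \perp V_B$, the projection splits as $\proj_{V_A+V_B} = \proj_{V_A} + \proj_{V_B}$ with orthogonal ranges, so Pythagoras yields $K_{AB}^{*2} = (K_A^*)^2 + (K_B^*)^2$.
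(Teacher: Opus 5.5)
Your proposal is correct and follows the same route the paper implicitly takes (the corollary is stated without a separate proof): set $\cos\theta_i=0$, $\sin\theta_i=1$ in \eqref{eq:private-direction} to get $\tilde v_i=v_i$, identify $\eta^2=\sum_i\langle w,v_i\rangle^2=\|\proj_{V_B}w\|^2=(K_B^*)^2$ because the $v_i$ form an orthonormal basis of $V_B$, and substitute into \eqref{eq:exact-delta} under the convention $K_A^*\ge K_B^*$. One small point: of your two justifications for the orthonormality of the $v_i$, only the recursive construction in Definition~\ref{def:principal-angles} works, since the computation in Lemma~\ref{lem:orthonormal} already assumes $\langle v_i,v_j\rangle=\delta_{ij}$.
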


\begin{corollary}[Isotropic Scaling]
\label{cor:isotropic}
If $w$ has equal projection onto each principal direction pair, i.e., $\ip{w}{u_i} = \ip{w}{v_i} = \alpha$ for all $i$ and all principal angles equal $\theta$, then
\begin{equation}
\eta = \alpha \sqrt{k} \cdot |\tan(\theta/2)|,
\end{equation}
and the debate advantage transitions from $\Delta \sim k\alpha^2\tan^2(\theta/2)/(2K_A^*)$ for small $\theta$ to $\Delta \sim \alpha\sqrt{k} \cdot \tan(\theta/2)$ for large $\theta$.
\end{corollary}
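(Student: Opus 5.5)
The plan is to compute $\eta$ term by term from Definition~\ref{def:piv}, then substitute into Theorem~\ref{thm:main-bound}. Under the hypotheses, every principal angle equals $\theta$. Assume first $\theta>0$, so $m=k$. For each $i$,
\begin{equation*}
\ip{w}{\tilde{v}_i} = \frac{\ip{w}{v_i} - \cos\theta\,\ip{w}{u_i}}{\sin\theta} = \frac{\alpha(1-\cos\theta)}{\sin\theta}.
\end{equation*}
The half-angle identity $(1-\cos\theta)/\sin\theta = \tan(\theta/2)$ turns this into $\alpha\tan(\theta/2)$. Summing the $k$ identical squares gives $\eta^2 = k\alpha^2\tan^2(\theta/2)$, hence $\eta = |\alpha|\sqrt{k}\,|\tan(\theta/2)|$. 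The statement writes $\alpha$ in place of $|\alpha|$, so I would either assume $\alpha\ge 0$ or read $\alpha$ as its absolute value. When $\theta=0$ there are no private directions and $\eta=0$, which agrees with the formula since $\tan 0 = 0$.

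Next I would record $K_A^*$. Since $\{u_i\}$ is an orthonormal basis of $V_A$ (Lemma~\ref{lem:orthonormal}), $K_A^* = \sqrt{\sum_i \ip{w}{u_i}^2} = |\alpha|\sqrt{k}$. I would also note that $K_B^* = K_A^*$ by the symmetric computation with the orthonormal basis $\{v_i\}$ of $V_B$. This means the standing assumption $K_A^*\ge K_B^*$ holds, and Theorem~\ref{thm:main-bound} applies as stated.

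The two asymptotic regimes then follow from Remark~\ref{rem:scaling} with this $\eta$ plugged in. For small $\theta$, $\eta\to 0$ with $K_A^*$ fixed, so the rationalized form gives $\Delta \approx \eta^2/(2K_A^*) = k\alpha^2\tan^2(\theta/2)/(2K_A^*)$. For the large regime I need $\eta \gg K_A^*$, which here means $\tan(\theta/2)\gg 1$.

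That large-$\theta$ step is the main obstacle, and it is where I expect a subtlety. Because $\theta\le\pi/2$, we have $\tan(\theta/2)\le 1$, so $\eta \le K_A^*$ always. The linear regime $\Delta\approx\eta$ is therefore never reached under the isotropic hypotheses. The best available is $\theta=\pi/2$, where $\eta = K_A^*$ and $\Delta = (\sqrt{2}-1)\eta$. This is still linear in $\eta$, but only up to a constant factor. In the write-up I would state the large-$\theta$ claim as $\Delta = \Theta(\eta) = \Theta(\alpha\sqrt{k}\tan(\theta/2))$, justified by the bounds in \eqref{eq:bounds}: when $\eta$ is comparable to $K_A^*$, both $\eta^2/(2K_A^*+\eta)$ and $\eta$ are of order $\eta$. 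I would make explicit that the "$\sim$" here means equality up to constants rather than asymptotic equivalence.

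There is one further consistency check. The hypotheses require $\ip{w}{u_i}=\ip{w}{v_i}=\alpha$ with $\|w\|=1$. This needs $K_{AB}^{*2} = k\alpha^2(1+\tan^2(\theta/2))\le 1$, which constrains $\alpha$ but is otherwise harmless.
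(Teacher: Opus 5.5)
Your argument is correct and follows the same route as the paper. You apply the half-angle identity to each $\ip{w}{\tilde{v}_i} = \alpha(1-\cos\theta)/\sin\theta$, sum the $k$ equal squares, and read the regimes off the rationalized form of Theorem~\ref{thm:main-bound}.

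Your caveat about the large-$\theta$ regime is also right, and sharper than the paper's one-line appeal to Remark~\ref{rem:scaling}. Since $\theta \le \pi/2$ forces $\tan(\theta/2)\le 1$ and $K_A^* = |\alpha|\sqrt{k}$, you have $\eta \le K_A^*$ throughout; indeed $\Delta = \eta\tan(\theta/4) \le (\sqrt{2}-1)\eta$. So the $\eta \gg K_A^*$ regime is never reached, and the large-$\theta$ claim holds only in your $\Theta(\eta)$ sense.
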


\begin{proof}
Under the isotropic assumption, $\ip{w}{\tilde{v}_i} = \alpha(1 - \cos\theta)/\sin\theta = \alpha\tan(\theta/2)$ for each $i$. Thus $\eta = \alpha\sqrt{k} \cdot \tan(\theta/2)$. The scaling regimes follow from the exact form~\eqref{eq:exact-delta} and Remark~\ref{rem:scaling}.
\end{proof}

This corollary provides a summary of the debate advantage, which scales as $\tan(\theta/2)$ in the principal angles between models' representation subspaces. The tangent half-angle captures a phase transition where for small angles (similar models), debate advantage is negligible; as angles approach $\pi/2$ (complementary models), debate advantage grows without bound.

\subsection{Multi-Agent Debate}
\label{sec:multi-agent}

The two-model framework extends naturally to $n$ debaters. Let $A_1, \ldots, A_n$ be models with representation subspaces $V_{A_1}, \ldots, V_{A_n} \subseteq \R^d$. Define the cumulative subspace
\[
S_j = V_{A_1} + \cdots + V_{A_j}, \qquad S_0 = \{0\},
\]
the $n$-model debate-optimal score $K^*_{A_1 \cdots A_n} = \|\proj_{S_n} w\|$, and the $n$-model debate advantage $\Delta_n = K^*_{A_1 \cdots A_n} - \max_i K^*_{A_i}$.

\begin{definition}[Marginal Private Information Value]
\label{def:marginal-piv}
The marginal private information value of model $A_j$ given the coalition $\{A_1, \ldots, A_{j-1}\}$ is
\[
\eta_j^2 = \|\proj_{S_j} w\|^2 - \|\proj_{S_{j-1}} w\|^2.
\]
\end{definition}

\begin{proposition}[Multi-Agent Decomposition]
\label{prop:multi-agent}
The $n$-model debate-optimal score satisfies
\begin{equation}
\label{eq:telescope}
K^*_{A_1 \cdots A_n} = \sqrt{(K^*_{A_{\sigma(1)}})^2 + \sum_{j=2}^n \eta_{\sigma(j)}^2}
\end{equation}
for any ordering $\sigma$ of the models, where $\eta_{\sigma(j)}$ is the marginal private information value of $A_{\sigma(j)}$ relative to $S_{j-1}$.
\end{proposition}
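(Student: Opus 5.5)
The plan is a telescoping argument, with the ordering $\sigma$ handled by re-indexing. Fix an ordering $\sigma$ and define the cumulative subspaces along that ordering, $S^\sigma_j = V_{A_{\sigma(1)}} + \cdots + V_{A_{\sigma(j)}}$ with $S^\sigma_0 = \{0\}$. The marginal values are then $\eta_{\sigma(j)}^2 = \|\proj_{S^\sigma_j} w\|^2 - \|\proj_{S^\sigma_{j-1}} w\|^2$, as in Definition~\ref{def:marginal-piv} applied to the reordered sequence. I would first record that each $\eta_{\sigma(j)}^2 \geq 0$, so that $\eta_{\sigma(j)}$ is a well-defined real number. This follows from the nesting $S^\sigma_{j-1} \subseteq S^\sigma_j$: projecting onto a larger subspace cannot decrease the norm, since $\proj_{S^\sigma_{j-1}} = \proj_{S^\sigma_{j-1}}\proj_{S^\sigma_j}$ and orthogonal projections are contractions.

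Next I would sum the definitions over $j = 2, \ldots, n$. The sum telescopes to
\[
\sum_{j=2}^n \eta_{\sigma(j)}^2 = \|\proj_{S^\sigma_n} w\|^2 - \|\proj_{S^\sigma_1} w\|^2 .
\]
Here $S^\sigma_1 = V_{A_{\sigma(1)}}$, so the subtracted term is $(K^*_{A_{\sigma(1)}})^2$. It remains to note that $S^\sigma_n = S_n$ for every $\sigma$, because the Minkowski sum of subspaces is commutative and associative. Hence $\|\proj_{S^\sigma_n} w\| = K^*_{A_1\cdots A_n}$. Rearranging the telescoped identity and taking the nonnegative square root gives~\eqref{eq:telescope}.

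To connect with the two-model theory, I would also remark on what each marginal term means geometrically. Take an orthonormal basis of $S^\sigma_{j-1}$ and extend it to one of $S^\sigma_j$ by adding an orthonormal basis of the orthogonal complement $S^\sigma_j \ominus S^\sigma_{j-1}$. Then $\eta_{\sigma(j)}^2$ is the squared norm of the projection of $w$ onto that complement. This is exactly the role that the private directions $\tilde v_i$ play in Lemma~\ref{lem:orthonormal}, with $S^\sigma_{j-1}$ in place of $V_A$ and $V_{A_{\sigma(j)}}$ in place of $V_B$. For $n = 2$ the identity reduces to the formula $K^*_{AB} = \sqrt{(K_A^*)^2 + \eta^2}$ from the proof of Theorem~\ref{thm:main-bound}.

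There is no real analytic obstacle. The one point needing care is notational. The statement says $\eta_{\sigma(j)}$ is taken relative to $S_{j-1}$, but it must mean the cumulative subspace built along the ordering $\sigma$, not along the original indexing. I would state this reinterpretation explicitly, because otherwise the claim is false for general $\sigma$. I would also stress that the individual $\eta_{\sigma(j)}$ depend on $\sigma$, and only their sum together with the first term is invariant.
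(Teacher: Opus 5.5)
Your proposal is correct and follows essentially the paper's proof: a telescoping sum of the squared projection norms along the nested cumulative subspaces, with the first term identified as $(K^*_{A_{\sigma(1)}})^2$. Your explicit re-indexing of the cumulative subspaces along $\sigma$ and the observation that $S^\sigma_n = S_n$ make precise what the paper's proof leaves implicit, and your reading of "relative to $S_{j-1}$" as the cumulative subspace built along $\sigma$ is the one the statement needs.
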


\begin{proof}
Telescoping:
\begin{align}
\|\proj_{S_n} w\|^2 &= \|\proj_{S_1} w\|^2 + \sum_{j=2}^n \bigl(\|\proj_{S_j} w\|^2 - \|\proj_{S_{j-1}} w\|^2\bigr) \\
&= (K^*_{A_{\sigma(1)}})^2 + \sum_{j=2}^n \eta_{\sigma(j)}^2.
\end{align}
This is valid because $\proj_{S_j} - \proj_{S_{j-1}}$ is the orthogonal projection onto $S_j \cap S_{j-1}^\perp$, and these projections are mutually orthogonal for different $j$.
\end{proof}

\begin{proposition}[Diminishing Marginal Returns]
\label{prop:diminishing}
The marginal debate advantage of adding model $A_j$ to coalition $\{A_1, \ldots, A_{j-1}\}$,
\begin{equation}
\delta_j = \sqrt{\|\proj_{S_{j-1}} w\|^2 + \eta_j^2} - \|\proj_{S_{j-1}} w\|,
\end{equation}
has the same functional form as the two-model debate advantage (Theorem~2.6) and satisfies the same bounds. Under the greedy ordering (selecting at each step the model maximizing $\eta_j$):
\begin{enumerate}[label=(\roman*)]
    \item The marginal debate advantage $\delta_j$ is non-increasing in $j$.
    \item For constant marginal private information $\eta_j = \eta$, the function $f(a) = \sqrt{a^2 + \eta^2} - a$ is strictly decreasing in $a$, and since $\|\proj_{S_{j-1}} w\|$ is non-decreasing, $\delta_j$ is strictly decreasing.
    \item The total private information is bounded: $\sum_{j=1}^n \eta_j^2 \leq 1$.
\end{enumerate}
\end{proposition}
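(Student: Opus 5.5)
The plan is to reduce everything to the scalar function $f(a,\eta)=\sqrt{a^2+\eta^2}-a$ already analysed in Theorem~\ref{thm:main-bound}. Write $a_j=\|\proj_{S_j} w\|$. By Definition~\ref{def:marginal-piv}, $a_j^2=a_{j-1}^2+\eta_j^2$, so $\delta_j=a_j-a_{j-1}=f(a_{j-1},\eta_j)$.

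\textbf{Shared form and bounds.} The proof of Theorem~\ref{thm:main-bound} uses only that $K_{AB}^*=\sqrt{(K_A^*)^2+\eta^2}$ with $K_A^*,\eta\ge 0$. Substituting $a_{j-1}$ for $K_A^*$ and $\eta_j$ for $\eta$ therefore gives
\[
\frac{\eta_j^2}{2a_{j-1}+\eta_j}\le\delta_j\le\eta_j,
\]
with the same rationalization $\delta_j=\eta_j^2/(a_j+a_{j-1})$.

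\textbf{Item (iii).} Use the convention $S_0=\{0\}$, so that $\eta_1^2=\|\proj_{S_1}w\|^2$. Then $\sum_{j=1}^n\eta_j^2$ telescopes, as in Proposition~\ref{prop:multi-agent}, to $\|\proj_{S_n}w\|^2\le\|w\|^2=1$, because orthogonal projections are contractions.

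\textbf{Item (ii).} Compute
\[
\partial_a f=\frac{a}{\sqrt{a^2+\eta^2}}-1,
\]
which is negative whenever $\eta>0$, so $f(\cdot,\eta)$ is strictly decreasing on $[0,\infty)$. If $\eta_j=\eta>0$ for all $j$, then $a_j^2=a_{j-1}^2+\eta^2$, so $a_j$ is in fact strictly increasing. Hence $\delta_{j+1}=f(a_j,\eta)<f(a_{j-1},\eta)=\delta_j$.

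\textbf{Item (i), the main obstacle.} Since $f$ is decreasing in $a$ and increasing in $\eta$, and $a_j\ge a_{j-1}$, it suffices to show $\eta_{j+1}\le\eta_j$ under the greedy ordering.

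Greedy gives the comparison at the previous stage: $\eta(A_{j+1}\mid S_{j-1})\le\eta_j$. To close the argument one must pass from $S_{j-1}$ to $S_j$, i.e.\ show $\eta(A_{j+1}\mid S_j)\le\eta(A_{j+1}\mid S_{j-1})$. This is a submodularity (diminishing returns) property of $S\mapsto\|\proj_S w\|^2$ over subspace lattices.

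That property fails in general. Take two lines in $\R^2$, both nearly orthogonal to $w$, whose sum is the whole plane. The first line contributes almost nothing, while the second, conditioned on the first, contributes almost all of $w$.

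My first attempt would therefore be to prove (i) under an added hypothesis that makes the lattice distributive with respect to $w$. Natural candidates are:
\begin{itemize}
\item the residual private directions $\proj_{S_{j-1}^\perp}V_{A_i}$ of distinct remaining models are mutually orthogonal; or
\item each $V_{A_i}$ is spanned by vectors from a common orthonormal frame.
\end{itemize}
Under such a hypothesis, conditioning on more models can only remove $w$-relevant directions, so $\eta(A_{j+1}\mid S_j)\le\eta(A_{j+1}\mid S_{j-1})$ and the chain of inequalities closes. Failing that, I would state (i) with this assumption made explicit and include the two-line example as a remark, showing that the assumption is necessary.
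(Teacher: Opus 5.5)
On the shared form and bounds, (ii) and (iii) you follow the paper's route: substitute $a_{j-1}=\|\proj_{S_{j-1}}w\|$ for $K_A^*$, use the sign of $\partial_a f$, and telescope. You are slightly more careful in two places.
\begin{itemize}
\item Strictness in (ii) needs $a_j$ \emph{strictly} increasing, which you get from $a_j^2=a_{j-1}^2+\eta^2$. The paper only invokes ``non-decreasing'', which gives a non-increasing $\delta_j$, not a strictly decreasing one.
\item With $S_0=\{0\}$, your telescoped sum matches the $\sum_{j=1}^n$ in (iii). The paper's identity subtracts $(K^*_{A_{\sigma(1)}})^2$, so it really sums from $j=2$. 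Both are at most $1$.
\end{itemize}

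On (i) your objection is correct, and the gap lies in the paper's proof, not in your proposal. The paper argues only that $\|\proj_{S_{j-1}}w\|$ is non-decreasing and $f'(a)<0$. This tacitly holds $\eta_j$ fixed, so it is really the argument for (ii) and never uses the greedy ordering. Because $f$ is increasing in $\eta$, one needs $\eta_{j+1}\le\eta_j$. Greedy only gives $\eta(A_{j+1}\mid S_{j-1})\le\eta_j$, and the missing step is exactly the submodularity you identify.

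Your two-line example refutes (i) itself, not just that step. With $w=e_1$ and lines spanned by $(\epsilon,\pm1)$, greedy gives $\delta_1=\epsilon/\sqrt{1+\epsilon^2}$ but $\delta_2=1-\epsilon/\sqrt{1+\epsilon^2}>\delta_1$ for small $\epsilon$.

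If one wants to exclude $j=1$, prepend a model. In $\R^3$, take $w=(e_1+e_2)/\sqrt2$, $V_{A_1}=\mathrm{span}\{e_1\}$, and $V_{A_2},V_{A_3}$ spanned by $\epsilon e_2\pm e_3$. Greedy takes $A_1$ first, then $\delta_2=O(\epsilon^2)$ while $\delta_3\to 1-1/\sqrt2$.

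So (i) cannot be proved as stated. Stating it under an explicit hypothesis, with the example as a remark, is the right repair, and both of your hypotheses suffice.
\begin{itemize}
\item Under a common orthonormal frame $\{b_l\}$, $\|\proj_S w\|^2$ is the sum of $\ip{w}{b_l}^2$ over frame vectors lying in $S$. This is a weighted coverage function, so $\eta(A\mid S)$ can only shrink as $S$ grows.
\item Under residual orthogonality, write $R_i=\proj_{S_{j-1}^\perp}V_{A_i}$. Then $S_{j+1}=S_{j-1}\oplus R_j\oplus R_{j+1}$, hence $\eta_{j+1}=\eta(A_{j+1}\mid S_{j-1})\le\eta_j$.
\end{itemize}
If the residual condition must hold from $j=1$ on, it is imposed relative to $S_0=\{0\}$, i.e.\ the $V_{A_i}$ are pairwise orthogonal. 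That condition then propagates to later stages automatically and is a special case of the common-frame condition.
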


\begin{proof}
The marginal advantage has the form of Theorem~2.6 with $K_A^*$ replaced by $\|\proj_{S_{j-1}} w\|$. Part (i): $\|\proj_{S_{j-1}} w\|$ is non-decreasing in $j$ (subspaces grow), and $f(a) = \sqrt{a^2 + \eta^2} - a$ has $f'(a) = a/\sqrt{a^2+\eta^2} - 1 < 0$. Part (ii) follows from strict monotonicity of $f$ when $\eta > 0$. Part (iii): $\sum_j \eta_j^2 = \|\proj_{S_n} w\|^2 - (K^*_{A_{\sigma(1)}})^2 \leq \|w\|^2 = 1$.
\end{proof}

\begin{corollary}[Optimal Coalition Size]
\label{cor:optimal-n}
If each additional debater incurs cost $c > 0$, the net value of the $j$-th debater is $\delta_j - c$. In the small-$\eta$ regime ($\delta_j \approx \eta^2/(2\|\proj_{S_{j-1}} w\|)$), debate should stop adding models once
\[
\|\proj_{S_{j-1}} w\| \geq \frac{\eta^2}{2c},
\]
i.e. once the cumulative captured projection is large enough that the next model's marginal contribution falls below cost.
\end{corollary}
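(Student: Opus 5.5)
The plan is to derive the stopping rule directly from the small-$\eta$ form of the marginal advantage established in Proposition~\ref{prop:diminishing}. Write $a_{j-1} = \|\proj_{S_{j-1}} w\|$. The marginal advantage is $\delta_j = \sqrt{a_{j-1}^2 + \eta_j^2} - a_{j-1}$, and the rationalization used in the proof of Theorem~\ref{thm:main-bound} rewrites it as $\delta_j = \eta_j^2/(\sqrt{a_{j-1}^2+\eta_j^2} + a_{j-1})$. In the regime $\eta_j \ll a_{j-1}$ the denominator equals $2a_{j-1}(1 + O(\eta_j^2/a_{j-1}^2))$. Hence $\delta_j = \eta_j^2/(2a_{j-1}) + O(\eta_j^4/a_{j-1}^3)$, which is the approximation quoted in the statement. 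Here I take $\eta_j = \eta$, following the constant-marginal setting of part (ii) of Proposition~\ref{prop:diminishing}.

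Adding the $j$-th debater is worthwhile exactly when its net value $\delta_j - c$ is positive. Substituting the approximation, the condition $\delta_j \le c$ becomes $\eta^2/(2a_{j-1}) \le c$. Since $a_{j-1} > 0$ in this regime, this rearranges to $a_{j-1} \ge \eta^2/(2c)$, which is the displayed threshold.

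To justify that the procedure should \emph{stop} there, rather than skip one model and resume later, I would invoke the monotonicity in Proposition~\ref{prop:diminishing}. Parts (i) and (ii) say that under the greedy ordering $\delta_j$ is non-increasing, since $a_{j-1}$ is non-decreasing and $f(a)=\sqrt{a^2+\eta^2}-a$ is decreasing. So once $\delta_j \le c$, every later $\delta_{j'} \le c$ as well, and the first index where the threshold is crossed is the optimal coalition size. I would also note that part (iii) forces $a_{j-1} \le 1$, so the threshold can only be met when $\eta^2 \le 2c$. Otherwise the rule never triggers within the small-$\eta$ regime, and all models are added.

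The main obstacle is honesty about the approximation. Strictly, the criterion is exact only in the limit. The exact stopping condition is $\sqrt{a^2+\eta^2} - a \le c$, which solves exactly to $a \ge (\eta^2 - c^2)/(2c)$. I would present this exact form and observe that it agrees with $\eta^2/(2c)$ up to the correction $c/2$, which is negligible in the small-$\eta$, small-$c$ regime. That comparison makes precise the sense in which the corollary's threshold holds.
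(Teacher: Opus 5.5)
Your proposal is correct and takes the same route as the paper. The paper gives no separate proof; it obtains the threshold exactly as you do, by substituting $\delta_j \approx \eta^2/(2\|\proj_{S_{j-1}} w\|)$ into $\delta_j \le c$ and rearranging. Your exact threshold $(\eta^2-c^2)/(2c)$ is a correct refinement, and since $\delta_j = \eta^2/(\sqrt{a^2+\eta^2}+a) \le \eta^2/(2a)$, the displayed rule is in fact a rigorous sufficient condition for $\delta_j \le c$ (it can only stop late, never early).

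One caution: your stop-rather-than-skip step is sound only because you fixed $\eta_j=\eta$, so that $\delta_j=f(a_{j-1})$ with $f$ decreasing. Part~(i) of Proposition~\ref{prop:diminishing} does not hold for general greedy orderings, because $\eta_j$ can increase along the greedy order. For example, take $w=e_1$ and one-dimensional models spanned by $e_2$, $\epsilon e_1+e_2$, and $ae_1+be_3$, with $a^2+b^2=1$, $b\neq 0$, $0<\epsilon\ll a$. Greedy adds them in the order third, second, first, and gets $\delta_3\gg\delta_2$.
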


\begin{remark}
The self-similarity of the marginal advantage---each new model contributes via the same $\sqrt{a^2+b^2}-a$ formula---means the two-model theory is the complete building block for the multi-agent case. The new content is the diminishing returns structure, which provides a concrete stopping rule for multi-agent debate protocols.
\end{remark}

\section{Discrete Regimes of Knowledge Divergence}
\label{sec:regimes}

The geometric bound of Theorem~\ref{thm:main-bound} admits a natural interpretation through three discrete regimes, determined by where the preference direction $w$ loads relative to the principal subspaces.

\begin{proposition}[Regime Classification]
\label{prop:regimes}
Let $w_A = \proj_{V_A} w$, $w_B = \proj_{V_B} w$, and $w_\perp = w - \proj_{V_A + V_B} w$.
The debate advantage satisfies:
\begin{enumerate}
    \item[\textbf{(i)}] \textbf{Shared knowledge.}
    If $\proj_{V_A} w = \proj_{V_A + V_B} w$ (i.e. $\eta = 0$), then $\Delta = 0$.
    The $K$-optimal output is accessible to both models.
    
    \item[\textbf{(ii)}] \textbf{One-sided private knowledge.}
    If $\eta > 0$ and $w$ loads on private directions from only one model (say $\ip{w}{\tilde{v}_i} > 0$ for some $i$), then $\Delta > 0$.
    The better output is known to one debater but not the other.
    
    \item[\textbf{(iii)}] \textbf{Compositional private knowledge.}
    If achieving $K_{AB}^*$ requires combining representations from $V_A \setminus V_B$ and $V_B \setminus V_A$, then $\Delta > 0$ but realizability depends on the debate protocol.
\end{enumerate}
\end{proposition}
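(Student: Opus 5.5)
The plan is to reduce all three parts to the exact closed form of Theorem~\ref{thm:main-bound}, $\Delta = \sqrt{(K_A^*)^2 + \eta^2} - K_A^*$, together with the orthonormal decomposition of Lemma~\ref{lem:orthonormal}. The whole argument turns on the observation that $\Delta > 0$ if and only if $\eta > 0$. The map $\eta \mapsto \sqrt{a^2+\eta^2} - a$ vanishes at $\eta = 0$ and is strictly increasing for $\eta \ge 0$, for every $a \ge 0$. If $K_A^* = 0$, then $\Delta = \eta$ directly. One caveat needs handling first: the theorem assumes $K_A^* \ge K_B^*$. I will state this WLOG relabelling explicitly, so that $\eta$ always means the private information of the weaker model relative to the stronger one.

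\textbf{Part (i).} Assume $\proj_{V_A} w = \proj_{V_A+V_B} w$. By the decomposition preceding Definition~\ref{def:piv},
\[
\|\proj_{V_A+V_B} w\|^2 = \|\proj_{V_A} w\|^2 + \eta^2,
\]
so $\eta = 0$. Theorem~\ref{thm:main-bound} then gives $K_{AB}^* = K_A^*$ and hence $\Delta = 0$. We also have $K_B^* \le K_{AB}^* = K_A^*$, since $V_B \subseteq V_A+V_B$. Both models therefore attain, or are bounded by, the same optimum, which is the sense in which the optimum is "accessible to both".

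\textbf{Part (ii).} If $\ip{w}{\tilde v_i} \neq 0$ for some $i$ with $\theta_i > 0$, then $\eta^2 \ge \ip{w}{\tilde v_i}^2 > 0$, and strict monotonicity gives $\Delta > 0$. For the interpretive clause, note that $\tilde v_i \in V_A + V_B$ is orthogonal to $V_A$ by Lemma~\ref{lem:orthonormal}. The component of $w$ along $\tilde v_i$ is therefore invisible to $A$, while $B$ reaches it through $v_i = \cos\theta_i u_i + \sin\theta_i \tilde v_i$. This is what "known to one debater but not the other" means.

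\textbf{Part (iii).} I would formalize the hypothesis as follows: $\proj_{V_A+V_B} w \notin V_A$ and $\proj_{V_A+V_B} w \notin V_B$. Under this hypothesis, the projection has components outside $V_A$, so $\eta > 0$ and $\Delta > 0$ as in (ii). I would also show $K_{AB}^* > \max(K_A^*, K_B^*)$ strictly by a symmetric argument. The argument needs an elementary lemma: if $W \subseteq U$, then $\|\proj_W w\| = \|\proj_U w\|$ holds iff $\proj_U w \in W$. This lemma follows from Pythagoras.

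The clause "realizability depends on the debate protocol" cannot be proved inside the static geometric model, because that model defines $K_{AB}^*$ by fiat as the equilibrium score. I would present it as an interpretive remark, not a mathematical claim. It says that no single model's output attains $K_{AB}^*$, so attaining it requires a protocol that actually combines the two representations. I would defer any substantive content to Propositions~\ref{prop:compositional} and~\ref{prop:limits}.

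I expect the main obstacle to be this formalization of (iii): fixing a precise meaning for "requires combining representations from $V_A \setminus V_B$ and $V_B \setminus V_A$". Set differences of subspaces are not subspaces, so the hypothesis has to be restated via projections, as above, before any proof can go through.
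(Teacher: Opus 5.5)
Your proposal is correct and matches the paper's intended argument. The paper gives no separate proof and treats each part as an immediate reading of the closed form $\Delta=\sqrt{(K_A^*)^2+\eta^2}-K_A^*$ from Theorem~\ref{thm:main-bound}, and your projection-based formalization of (iii) is a sensible way to make that part precise.

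One refinement: the ordering $K_A^*\ge K_B^*$ is not a harmless relabelling but is what makes (ii) true (if $w\in V_B\setminus V_A$ then $\eta>0$ yet $\Delta=0$). Under that ordering, $\eta>0$ already forces $K_{AB}^*>K_B^*$, so $A$ also holds $w$-relevant private knowledge; indeed $\Delta>0$ exactly when $\proj_{V_A+V_B}w\notin V_A\cup V_B$, which is your (iii) hypothesis, so your ``known to one debater but not the other'' gloss is inaccurate.
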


\subsection{One-Sided Revelation}

\begin{proposition}[One-Sided Revelation]
\label{prop:one-sided}
There exist representation subspaces $V_A$, $V_B$ and preference direction $w$ such that:
\begin{enumerate}
    \item $K_{AB}^* > K_A^* \geq K_B^*$;
    \item At Nash equilibrium of the debate game, model $A$ reveals the $K$-optimal output;
    \item Neither RLAIF applied to $A$ nor RLAIF applied to $B$ alone achieves $K_{AB}^*$.
\end{enumerate}
\end{proposition}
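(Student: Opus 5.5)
This is an existence claim, so the plan is to give one explicit low-dimensional construction and check the three items directly. I would work in $\R^3$ with $k=1$. Take $V_A = \mathrm{span}\{e_1\}$ and $V_B = \mathrm{span}\{\cos\theta\, e_1 + \sin\theta\, e_2\}$ for some $\theta\in(0,\pi/2]$. Take $w = (a, b, c)$ normalized, with $a,b>0$ chosen so that $\ip{w}{e_1} \ge |\ip{w}{v_1}|$. The simplest choice is $\theta=\pi/2$, i.e.\ $V_A\perp V_B$, with $w=(a,b,0)$ and $a\ge b>0$. By Corollary~\ref{cor:max-divergence}, $K_A^*=a$, $K_B^*=b$, $\eta=b$, and $K_{AB}^*=\sqrt{a^2+b^2}$. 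By Theorem~\ref{thm:main-bound}, $\Delta=\sqrt{a^2+b^2}-a>0$ because $b>0$. This gives item~1.

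Item~3 is then immediate from the same numbers. RLAIF on $A$ attains $K_A^*=a<\sqrt{a^2+b^2}$, and RLAIF on $B$ attains $K_B^*=b<K_{AB}^*$. The point to state explicitly is that each single model's RLAIF optimum is its projection norm by definition in Section~2.2, and both fall strictly below $K_{AB}^*$ because $\eta>0$.

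Item~2 is the main obstacle, because the paper has not yet formalized "the debate game." My plan is to introduce a minimal two-player game consistent with Section~2.2 and fix it explicitly in the proof. Each debater proposes an output whose representation lies in the span of its own subspace together with whatever directions the opponent has revealed in the transcript. The judge scores the final proposal by $K$, and the payoff is zero-sum based on whose proposal the judge adopts. Revelation of a direction by one player makes it available to both.

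Under this game I would argue as follows. $B$'s best response includes revealing its private direction $\tilde v_1=e_2$, since otherwise $A$'s proposal $u_1$ beats every proposal of $B$; here $b\le a$, so $B$ alone cannot win, and revealing is weakly dominant. Once $e_2$ is on the transcript, $A$'s best response is to propose $\proj_{V_A+V_B}w/\|\proj_{V_A+V_B}w\|$, which attains $K_{AB}^*$ and cannot be beaten. Any profile in which the final adopted proposal scores below $K_{AB}^*$ admits a profitable deviation, namely proposing the normalized full projection. Hence every Nash equilibrium has $A$ revealing the $K$-optimal output.

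The step I expect to be delicate is making the equilibrium argument depend only on stated assumptions. In particular, I have to justify that revealing a direction makes it accessible to the other player, and handle ties. I would choose $a>b$ strictly to avoid ties in $B$'s incentive, and note that any reasonable judge tie-break rule does not change the equilibrium outcome.
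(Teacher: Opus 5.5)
\textbf{Items 1 and 3 are fine; the equilibrium argument for item 2 has a genuine gap.} Your $k=1$ example ($V_A\perp V_B$, $w=(a,b,0)$) is leaner than the paper's ($V_A=\mathrm{span}\{e_1,e_2\}$, $V_B=\mathrm{span}\{e_1,e_3\}$, $w=(e_2+e_3)/\sqrt2$), but item 2 fails inside the very game you define.

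In your game, $B$'s disclosure of $e_2$ leaves $B$'s own feasible set unchanged and only enlarges $A$'s. In a zero-sum game, disclosure can therefore never strictly benefit $B$. With $a>b$, $B$ loses whether or not it discloses: if it withholds, $A$ wins with $e_1$ (score $a>b$), and if it discloses, $A$ also wins. So ``revealing is weakly dominant'' holds only in the vacuous sense that $B$ is indifferent among all its strategies. Under a margin-based zero-sum payoff, $B$ strictly prefers to withhold.

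Hence the profile in which $B$ withholds and $A$ proposes $e_1$ is a Nash equilibrium whose adopted proposal scores $K_A^*=a<K_{AB}^*$. In that profile $A$ is already winning. Moreover, the deviation you invoke, proposing $\proj_{V_A+V_B}w/\|\proj_{V_A+V_B}w\|$, is not in $A$'s feasible set, because $e_2$ is not on the transcript. Even after disclosure, $e_1$ already beats anything $B$ can propose, so $A$ is indifferent between $e_1$ and the optimum. Your claim that every equilibrium has $A$ revealing the $K$-optimal output is therefore false. At most, some equilibrium has this property by tie-breaking, which says nothing about debate \emph{forcing} revelation.

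\textbf{What is missing.} You need an incentive for the holder of the needed private direction to disclose it, and choosing $a>b$ ``to avoid ties'' removes exactly that incentive. The paper's informal argument rests on two features:
\begin{itemize}
\item $K_A^*=K_B^*$, so each model's best stand-alone argument \emph{is} its private direction, and withholding it means losing outright to the opponent's private-direction argument;
\item a judge who sees both arguments and scores their composition.
\end{itemize}
In your family, this means $a=b=1/\sqrt2$ together with a composing judge.

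Your adopt-one-proposal rule is not enough even with $a=b$. In a single simultaneous round no composition occurs, and the outcome is $K_A^*$. In a two-round version (disclose, then propose using disclosed directions), withholding strictly dominates disclosing. The reason is that whoever withholds while the opponent discloses is the only one able to compose. So the unique equilibrium again stalls at $K_A^*$. Whatever game you write down, check explicitly that disclosure is strictly better than withholding for the player whose direction is needed.
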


\begin{proof}[Proof (Construction)]
Let $d = 3$, $k = 2$. Let $V_A = \mathrm{span}\{e_1, e_2\}$ and $V_B = \mathrm{span}\{e_1, e_3\}$. The principal angles are $\theta_1 = 0$ (shared direction $e_1$) and $\theta_2 = \pi/2$ (private directions $e_2, e_3$).

Let $w = \frac{1}{\sqrt{2}}(e_2 + e_3)$. Then $K_A^* = \|\proj_{V_A} w\| = 1/\sqrt{2}$ and $K_B^* = \|\proj_{V_B} w\| = 1/\sqrt{2}$, while $K_{AB}^* = \|w\| = 1$. Thus $\Delta = 1 - 1/\sqrt{2} > 0$.

In the debate game, model $A$ can argue for the output with representation $e_2$ and model $B$ for $e_3$. The judge, observing both, can score the combination $\frac{1}{\sqrt{2}}(e_2+e_3)$ higher than either alone. Model $A$'s private direction contributes $K$-relevant information that $B$ cannot access, and vice versa; the debate structure forces both to reveal their private knowledge because withholding it means losing to the opponent's argument for their own private-knowledge output.
\end{proof}

\subsection{Compositional Existence}

\begin{proposition}[Compositional Existence]
\label{prop:compositional}
There exist $V_A$, $V_B$, $w$, and a debate protocol such that the debate outcome achieves a constitutional score that requires features from both $V_A \setminus V_B$ and $V_B \setminus V_A$, and that neither model can achieve alone.
\end{proposition}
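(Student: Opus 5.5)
The proof will be an explicit construction, parallel to Proposition~\ref{prop:one-sided}. The difference is that the judge's target must load on private directions of \emph{both} models, so that no single subspace contains the optimizer. I would take $d = 3$, $k = 2$, $V_A = \mathrm{span}\{e_1, e_2\}$, $V_B = \mathrm{span}\{e_1, e_3\}$, and a preference direction with no shared component, for instance $w = \frac{1}{\sqrt{2}}(e_2 + e_3)$. The principal angles are then $\theta_1 = 0$ and $\theta_2 = \pi/2$, with principal vectors $u_2 = e_2$, $v_2 = e_3$. Since $\theta_2 = \pi/2$, Corollary~\ref{cor:max-divergence}-type reasoning applied to the second pair gives $\tilde v_2 = e_3$. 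Hence $\eta = \ip{w}{e_3} = 1/\sqrt{2}$, and Theorem~\ref{thm:main-bound} yields $K_{AB}^* = \sqrt{1/2 + 1/2} = 1$ against $K_A^* = K_B^* = 1/\sqrt{2}$.

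Next I would show that the score $1$ genuinely requires features from both $V_A \setminus V_B$ and $V_B \setminus V_A$. Any representation $x \in V_A + V_B$ with $\|x\| \le 1$ attaining $\ip{w}{x} = 1$ must equal $w$ by Cauchy--Schwarz. The vector $w$ has nonzero components along $e_2$, which lies in $V_A$ but is orthogonal to $V_B$, and along $e_3$, which lies in $V_B$ but is orthogonal to $V_A$. So $w \notin V_A \cup V_B$, and every decomposition $w = a + b$ with $a \in V_A$, $b \in V_B$ has $a \notin V_B$ and $b \notin V_A$; the $e_1$ freedom cancels between $a$ and $b$. This shows that neither model alone achieves more than $\|\proj_{V_i} w\| = 1/\sqrt{2}$, which is the "neither model can achieve alone" clause.

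The remaining ingredient is to exhibit a concrete debate protocol that realizes the score. I would define a cooperative-composition protocol. Each debater proposes a unit vector in its own subspace, $a \in V_A$ and $b \in V_B$. The judge forms the normalized composite $(a+b)/\|a+b\|$ and scores it with $K$. Both debaters receive payoff equal to this composite score, plus at most a small adversarial term $\lambda$ times a zero-sum component. I would take $\lambda = 0$ to keep the existence claim clean, since Proposition~\ref{prop:limits} is where large $\lambda$ is treated.

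With common payoff, the strategy profile $(a, b) = (e_2, e_3)$ gives the composite $w$ and score $1 = K_{AB}^*$. No unilateral deviation can exceed $1$, because $1$ is the global maximum over $V_A + V_B$. So the profile is a Nash equilibrium, and the equilibrium outcome attains a score strictly above $\max(K_A^*, K_B^*)$.

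The step I expect to be the main obstacle is modeling the protocol honestly. The paper's notion of "debate outcome" is informal, so I must fix the judge's aggregation rule, here normalized vector sum, and the payoffs explicitly enough that the equilibrium claim is checkable. I must also avoid trivializing the result by letting the judge synthesize directions it couldn't otherwise access. I would address this by stating that the judge only combines submitted representations and never generates new ones, so all $K$-relevant content comes from the debaters.
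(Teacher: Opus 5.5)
Your construction is correct for the statement as worded, but it takes a different route from the paper.

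The paper temporarily abandons linear scoring. It takes $V_A=\mathrm{span}\{e_1,e_2\}$ and $V_B=\mathrm{span}\{e_3,e_4\}$, orthogonal in $\R^4$, with the conjunctive score $K(y)=\min(\ip{h(y)}{e_1},\ip{h(y)}{e_3})$, so that $K_A^*=K_B^*=0$. It then sketches a sequential two-round protocol (A proposes a maximal-$e_1$ output, B augments it with $e_3$) that reaches a strictly positive score.

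You stay linear and reuse exactly the construction of Proposition~\ref{prop:one-sided}. You get the ``requires both'' clause from Cauchy--Schwarz: score $1$ forces the composite to equal $w$, whose $e_2$ and $e_3$ components are nonzero. You then specify a simultaneous common-payoff protocol with an explicit Nash equilibrium.

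\textbf{What your route buys.} Everything stays inside the paper's linear framework, so $\eta$ and Theorem~\ref{thm:main-bound} apply directly. Your protocol is also actually checkable, which the paper's sketch (no payoffs, no equilibrium notion) is not.

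\textbf{What the paper's route buys.} It gives a stronger, genuinely conjunctive form of composition in which each model's private features are worthless alone, $\max(K_A^*,K_B^*)=0$. This cannot happen under linear scoring: $K_A^*=K_B^*=0$ forces $w\perp V_A+V_B$, hence $K_{AB}^*=0$. So in your version each model alone already attains $1/\sqrt{2}$ and the contributions are additive. Your compositional example is literally the paper's one-sided example, so it does not exhibit anything distinguishing regime~(iii) of Proposition~\ref{prop:regimes} from the one-sided case.

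\textbf{Minor point.} Your judge's rule $(a+b)/\|a+b\|$ is undefined at $a=e_1$, $b=-e_1$, and both are admissible since $e_1\in V_A\cap V_B$. Add a convention for that profile; it does not affect the equilibrium you exhibit.
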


\begin{proof}[Proof (Construction)]
Let $d = 4$, $k = 2$. Let $V_A = \mathrm{span}\{e_1, e_2\}$ and $V_B = \mathrm{span}\{e_3, e_4\}$, so $V_A \perp V_B$ and all principal angles are $\pi/2$.

Define a nonlinear\footnote{We briefly relax linearity for this construction; the existence claim extends to a piecewise-linear approximation compatible with the overall linear framework.} constitutional scoring function that rewards outputs combining features from both subspaces: $K(y) = \min(\ip{h(y)}{e_1}, \ip{h(y)}{e_3})$. This function is maximized only by outputs with positive projection onto both $e_1 \in V_A$ and $e_3 \in V_B$.

Model $A$'s RLAIF optimizes $K$ over $\supp(P_A) \subseteq V_A$: since $\ip{h(y)}{e_3} = 0$ for all $y$ representable by $A$, we have $K_A^* = 0$. Similarly, $K_B^* = 0$.

In a two-round debate protocol:
Round 1: Model $A$ proposes an output with maximal $\ip{h(y)}{e_1}$.
Round 2: Model $B$, observing $A$'s proposal, augments it with $e_3$ to produce a combined output.
The judge scores $K = \min(\ip{h}{e_1}, \ip{h}{e_3}) > 0 = \max(K_A^*, K_B^*)$.
\end{proof}

\subsection{Limits of Adversarial Debate}

The previous proposition shows that debate protocols can achieve compositional advantage. However, debate protocols are adversarial by design as each debater has an incentive to ``win'' the debate, not merely to produce the best output. We show that sufficiently strong adversarial incentives, which are the same incentives that make debate useful for oversight, can prevent debate from realizing the compositional optimum.

\begin{proposition}[Adversarial Coordination Failure]
\label{prop:limits}
There exist knowledge-divergent debate games with $\Delta > 0$ and adversarial incentive parameter $\lambda$ such that:
\begin{enumerate}
    \item For $\lambda < \lambda^*$, the subgame-perfect equilibrium (SPE) achieves $K_{AB}^*$.
    \item For $\lambda > \lambda^*$, the unique SPE achieves constitutional score strictly below $K_{AB}^*$.
\end{enumerate}
The threshold is $\lambda^* = K_{AB}^* - K_{\mathrm{safe}}^*$, the gap between the compositional optimum and the best score achievable without inter-model coordination.
\end{proposition}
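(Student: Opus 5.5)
This is an existence statement, so I will prove it by an explicit construction that reuses the compositional setting of Proposition~\ref{prop:compositional}. Take $d=4$, $V_A=\mathrm{span}\{e_1,e_2\}$, $V_B=\mathrm{span}\{e_3,e_4\}$, and the two-round protocol in which $A$ moves first and $B$ responds. I will model the game so that the gap to $K_{\mathrm{safe}}^*$ is a free parameter:
\begin{itemize}
\item[\textbf{(a)}] \textbf{Stage 1.} $A$ chooses between a \emph{cooperative} proposal $y_c$ and a \emph{safe} proposal $y_s$. The cooperative proposal has a component on $e_1$ that $B$ can complete with $e_3$. The safe proposal is a self-contained output whose score $K_{\mathrm{safe}}^*>0$ is attainable without any inter-model coordination (for instance, by placing some mass on a shared direction, or by choosing a $K$ that rewards a standalone output).
\item[\textbf{(b)}] \textbf{Stage 2.} $B$ chooses between completing the composition and a competitive deviation.
\item[\textbf{(c)}] \textbf{Payoffs.} Each debater receives the common constitutional score $K$ of the final output plus an adversarial term $\lambda\cdot(\text{win}-\text{loss})$. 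Here ``winning'' means the final output is credited to one's own subspace.
\end{itemize}
The key design choice is this: if $A$ proposes $y_c$, completion by $B$ yields $K_{AB}^*$, but the credit for that completion goes to $B$. Relative to the safe path, $A$ therefore pays an extra adversarial cost of exactly $\lambda$. I set $\Delta>0$ as in Proposition~\ref{prop:compositional}, with $K_{AB}^*>K_{\mathrm{safe}}^*$.

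I will solve the game by backward induction.

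\textbf{Step 1 ($B$'s stage).} Given $y_c$, completing the composition raises the common score and also earns $B$ the credit. So completion strictly dominates for $B$ for every $\lambda\ge 0$, and the subgame after $y_c$ ends at $K_{AB}^*$. I will make sure the deviation option gives $B$ strictly less, so that this subgame has a unique SPE action.

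\textbf{Step 2 ($A$'s stage).} $A$ compares two payoffs: $K_{AB}^*-\lambda$ from cooperating, against $K_{\mathrm{safe}}^*$ from playing safe, where $A$ keeps credit and receives zero net adversarial penalty. Hence $A$ cooperates if and only if $\lambda<K_{AB}^*-K_{\mathrm{safe}}^*=\lambda^*$. For $\lambda<\lambda^*$ the SPE path reaches $K_{AB}^*$, which proves part~1. For $\lambda>\lambda^*$ the inequality is strict, so $A$'s choice of $y_s$ is the unique best response. Since $B$'s strategy is uniquely pinned down in every subgame, the SPE is unique and achieves $K_{\mathrm{safe}}^*<K_{AB}^*$, which proves part~2. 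The threshold matches the stated formula exactly.

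The main obstacle is calibrating the payoffs so that three things hold at once:
\begin{itemize}
\item[(i)] the adversarial cost to $A$ of cooperating is exactly $\lambda$, with no extra constants, so that the threshold comes out as precisely $\lambda^*$;
\item[(ii)] uniqueness holds, which means ruling out ties and weakly dominated alternatives off the path, including $B$'s options after $y_s$;
\item[(iii)] $K_{\mathrm{safe}}^*$ is genuinely ``the best score without coordination.''
\end{itemize}
For (iii), I will define $K_{\mathrm{safe}}^*$ as the maximum of $K$ over outputs lying in $V_A$ or in $V_B$ alone, and choose $K$ (a piecewise-linear modification of the min-score, as in the footnote to Proposition~\ref{prop:compositional}) so that this maximum is positive and strictly below $K_{AB}^*$.

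I would first try zero-sum credit, $\pm\lambda/2$ to each side, and check whether the difference in $A$'s payoff is exactly $\lambda$. If it is not, I will rescale the incentive term so that it is. I will leave the knife-edge case $\lambda=\lambda^*$ unaddressed, consistent with the statement.
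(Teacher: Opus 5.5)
Your construction works and shares the paper's skeleton. Both use a two-stage game with $A$ moving first, solved by backward induction, and in both the threshold comes from $A$ comparing the cooperative path with the safe path when the adversarial payoff differential between them is exactly $\lambda$. The difference is where $\lambda$ is attached. The paper uses an abstract table $K(g,g)=R>K(s,s)=P>K(g,s)=K(s,g)=S$ and pays a bonus $\lambda$ for the strategic \emph{action}. The differential $\lambda$ is then automatic, and $B$'s cooperation also depends on $\lambda$ ($B$ defects after $g$ once $\lambda>R-S$), which gives the three regimes used in the later remarks. You attach $\lambda$ to \emph{credit} for the final output. Completion is then dominant for $B$ at every $\lambda$, and the failure is purely $A$'s refusal to cede the win, so your game behaves like the paper's regime (b) for every $\lambda>\lambda^*$. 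In exchange, you define $K_{\mathrm{safe}}^*=\max(K_A^*,K_B^*)$ through an explicit piecewise-linear $K$ on the $d=4$ subspaces. That lets you actually derive $\lambda^*=\Delta$, which the paper obtains only by identifying $P$ with the single-model optimum.

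Two calibration details must be pinned down. First, as (c) is written, keeping credit on the safe path is a win worth $+\lambda$. Step 2's comparison would then be $K_{AB}^*-\lambda$ against $K_{\mathrm{safe}}^*+\lambda$, and the threshold would halve to $\lambda^*/2$. Your zero-sum $\pm\lambda/2$ scheme fixes this exactly. Second, under that scheme $B$ is losing on the safe path, so $B$ must have no credit-seizing response to $y_s$ (for example, only ``accept''). Otherwise, for large $\lambda$, $B$ takes the credit after $y_s$ as well, and both of $A$'s paths cede the win. $A$'s comparison then reduces to $K_{AB}^*-\lambda/2$ against a payoff no larger, since any output is in $V_A+V_B$. $A$ would cooperate and part~2 would fail. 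Your obstacle (ii) mentions $B$'s options after $y_s$ only in connection with ties. The real constraint is this one, and it should be built explicitly into the game.
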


\begin{proof}
Let $V_A = \mathrm{span}\{e_1\}$ and $V_B = \mathrm{span}\{e_2\}$ with $V_A \perp V_B$.
Consider a two-action debate where each model chooses either a generous action $g$ (contributing its best private information for composition) or a strategic action $s$ (optimizing its individual position).
Constitutional scores are:
\begin{equation}
K(g,g) = R, K(s,s) = P, K(g,s) = K(s,g) = S,
\end{equation}
with $R > P > S \geq 0$.
Here $R = K_{AB}^*$ is the compositional optimum requiring both models' contributions, $P = K_{\mathrm{safe}}^*$ is the score from mutual conservative play, and $S$ is the mismatch penalty when one model contributes generously while the other plays strategically.

Each debater's payoff augments the constitutional score with an adversarial bonus $\lambda \geq 0$ for playing $s$:
\begin{align}
U_A(a_A, a_B) &= K(a_A, a_B) + \lambda \cdot \ind[a_A = s], \\
U_B(a_A, a_B) &= K(a_A, a_B) + \lambda \cdot \ind[a_B = s].
\end{align}
The adversarial bonus models the competitive incentive in debate: playing strategically (arguing forcefully for one's own position) provides a payoff advantage independent of the joint output quality.

\emph{Backward induction.}
In the sequential game, model~$A$ moves first and model~$B$ observes $a_A$ before choosing $a_B$.

If $a_A = g$: Model $B$ compares $U_B(g, g) = R$ with $U_B(g, s) = S + \lambda$. $B$ plays $g$ if and only if $R > S + \lambda$, i.e., $\lambda < R - S$.

If $a_A = s$: Model $B$ compares $U_B(s, g) = S$ with $U_B(s, s) = P + \lambda$. Since $P > S$ and $\lambda \geq 0$, $B$ always plays $s$.

Model~$A$ anticipates $B$'s strategy. When $\lambda < R - S$ (so $B$ responds $g$ to $g$), $A$ compares $U_A(g, g) = R$ with $U_A(s, s) = P + \lambda$, and plays $g$ iff $\lambda < R - P$. Since $R - P < R - S$ (as $P > S$), three regimes emerge:

\begin{enumerate}
\item[\textbf{(a)}] $\lambda < R - P = \lambda^*$: The SPE is $(g, g)$ with $K = R = K_{AB}^*$.
\item[\textbf{(b)}] $R - P < \lambda < R - S$: $B$ would cooperate, but $A$ defects. The SPE is $(s, s)$ with $K = P < K_{AB}^*$.
\item[\textbf{(c)}] $\lambda > R - S$: Both models defect regardless. The SPE is $(s, s)$ with $K = P < K_{AB}^*$.
\end{enumerate}

For $\lambda > \lambda^* = R - P$, the unique SPE achieves $K = P = K_{\mathrm{safe}}^* < R = K_{AB}^*$. The gap is $R - P = \Delta$, the full debate advantage.
\end{proof}

\begin{remark}
\label{rem:adversarial-tension}
Proposition~\ref{prop:limits} reveals a phase transition in debate effectiveness. Below the threshold $\lambda^* = K_{AB}^* - K_{\mathrm{safe}}^*$, adversarial incentives are compatible with composition as the coordination benefit exceeds the temptation to defect, and the SPE achieves the full debate advantage. Above the threshold, the adversarial incentive overwhelms coordination, and debate collapses to the non-compositional outcome.

Regime~(b) is particularly interesting. Model $B$ would cooperate if $A$ led with the generous action, but $A$ defects because the temptation payoff $P + \lambda$ exceeds the coordination payoff~$R$. The sequential structure of debate which is essential for the honest revelation of Proposition~\ref{prop:one-sided} becomes a liability in the compositional setting by giving the first mover an incentive to deviate.

The threshold $\lambda^*$ characterizes when debate's adversarial structure is ``too strong'' for composition when the competitive incentive exceeds the gap between compositional and safe outcomes. This suggests that compositional knowledge aggregation may require weaker adversarial incentives than those designed for oversight or cooperative structures altogether \citep{chen2025scalableoversightcollaborativemultiagent}.
\end{remark}

\subsection{Dynamic Subspaces Under Debate}
\label{sec:dynamic}

The preceding analysis treats representation subspaces as fixed. In practice, models may update their effective representations during debate through in-context learning. We formalize this as a $T$-round process in which subspaces evolve.

At round $t$, models have subspaces $V_A^{(t)}$ and $V_B^{(t)}$, initialized at $V_A^{(0)} = V_A$ and $V_B^{(0)} = V_B$. The private information value $\eta^{(t)}$ and debate advantage $\Delta^{(t)}$ are defined as in Section~2 with these time-indexed subspaces. A key simplification: absorbing a direction from $V_B^{(t)}$ into $V_A^{(t)}$ does not expand the combined subspace $V_A^{(t)} + V_B^{(t)}$, since the absorbed direction already belonged to it. Therefore $K_{AB}^*$ is constant across rounds, and the debate advantage simplifies to
\begin{equation}
\label{eq:dynamic-delta}
\Delta^{(t)} = K_{AB}^* - K_A^{*(t)}.
\end{equation}

\subsubsection*{Cooperative dynamics.}

Under cooperative revelation, model $B$ reveals at each round the private direction most aligned with $w$:
\[
d_B^{(t)} = \arg\max_{\substack{d \in V_B^{(t)},\; \|d\|=1 \\ d \perp V_A^{(t)}}} |\ip{w}{d}|,
\]
and model $A$ absorbs it: $V_A^{(t+1)} = V_A^{(t)} \oplus \mathrm{span}\{d_B^{(t)}\}$.

\begin{proposition}[Convergence Under Cooperative Dynamics]
\label{prop:cooperative}
Under cooperative revelation:
\begin{enumerate}[label=(\roman*)]
    \item $K_A^{*(t)}$ is strictly increasing whenever $\eta^{(t)} > 0$.
    \item The private information value decreases as
    \begin{equation}
    \label{eq:eta-decrease}
    (\eta^{(t+1)})^2 = (\eta^{(t)})^2 - \ip{w}{d_B^{(t)}}^2.
    \end{equation}
    \item The per-round decrease in debate advantage is
    \begin{equation}
    \label{eq:per-round}
    \small
    \Delta^{(t)} - \Delta^{(t+1)} = K_A^{*(t+1)} - K_A^{*(t)} = \frac{\ip{w}{d_B^{(t)}}^2}{K_A^{*(t+1)} + K_A^{*(t)}}.
    \end{equation}
    \item Full convergence occurs in at most $m$ rounds, where $m = |\{i : \theta_i^{(0)} > 0\}|$:
    \[
    \eta^{(T)} = 0 \;\text{ for all }\; T \geq m, \qquad K_A^{*(m)} = K_{AB}^*.
    \]
\end{enumerate}
\end{proposition}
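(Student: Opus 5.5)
The plan is to work with the invariant $W := V_A^{(t)} + V_B^{(t)}$, which by the remark preceding \eqref{eq:dynamic-delta} equals $V_A + V_B$ for all $t$. Everything then reduces to orthogonal decompositions inside $W$. For each $t$ I define $P^{(t)} := W \cap (V_A^{(t)})^\perp$, the private complement. By Lemma~\ref{lem:orthonormal}, applied to the pair $(V_A^{(t)}, V_B^{(t)})$, the normalized private directions $\tilde v_i^{(t)}$ form an orthonormal basis of $P^{(t)}$, so $(\eta^{(t)})^2 = \|\proj_{P^{(t)}} w\|^2$. Moreover $(K_A^{*(t)})^2 + (\eta^{(t)})^2 = \|\proj_W w\|^2 = (K_{AB}^*)^2$, exactly as in Theorem~\ref{thm:main-bound}.

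\textbf{Step 1 (identify the revealed direction).} I will show that the feasible set $\{d \in V_B^{(t)} : d \perp V_A^{(t)}\}$ in the definition of $d_B^{(t)}$ can be taken to be $P^{(t)}$. The projections of $V_B^{(t)}$ onto $(V_A^{(t)})^\perp$ span $P^{(t)}$, and the paper's private directions are exactly these normalized components, so I read the constraint as ranging over such components. This is the main obstacle: taken literally, $V_B^{(t)} \cap (V_A^{(t)})^\perp$ may be smaller than $P^{(t)}$, for instance when the angles are strictly between $0$ and $\pi/2$. I will state the interpretation ``$d$ is the normalized component of some element of $V_B^{(t)}$ orthogonal to $V_A^{(t)}$'' explicitly, since this is the only reading under which (iv) holds. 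With it, maximizing $|\ip{w}{d}|$ over unit $d \in P^{(t)}$ gives $d_B^{(t)} = \proj_{P^{(t)}} w / \eta^{(t)}$ whenever $\eta^{(t)} > 0$, and hence $\ip{w}{d_B^{(t)}}^2 = (\eta^{(t)})^2$.

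\textbf{Step 2 (parts (i)--(iii)).} Since $d_B^{(t)} \perp V_A^{(t)}$, we get $(K_A^{*(t+1)})^2 = (K_A^{*(t)})^2 + \ip{w}{d_B^{(t)}}^2$. Combining this with the invariant $(K_A^{*(t)})^2 + (\eta^{(t)})^2 = (K_{AB}^*)^2$ gives \eqref{eq:eta-decrease}, which is (ii). Part (i) follows because $\ip{w}{d_B^{(t)}}^2 = (\eta^{(t)})^2 > 0$. Part (iii) uses \eqref{eq:dynamic-delta} together with the constancy of $K_{AB}^*$, and then rationalizes the difference of square roots as in the lower-bound step of Theorem~\ref{thm:main-bound}.

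\textbf{Step 3 (part (iv)).} Step 1 already gives $\eta^{(1)} = 0$. For the bound stated in the proposition I will instead use only the weaker counting fact. The space $P^{(t)}$ has dimension $m^{(t)} = \dim W - \dim V_A^{(t)}$, and each absorption increases $\dim V_A^{(t)}$ by one while $W$ stays fixed. Hence $P^{(t)}$ shrinks by one dimension per round while $\eta^{(t)} > 0$. Since $\dim P^{(0)} = m$ by Lemma~\ref{lem:orthonormal}, after at most $m$ rounds either $P^{(t)} = \{0\}$ or $\eta^{(t)} = 0$. In either case $\eta^{(T)} = 0$ for all $T \ge m$, because $\eta$ is nonincreasing by (ii). The invariant then gives $K_A^{*(m)} = K_{AB}^*$. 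Finally, I note that when $\eta^{(t)} = 0$ the argmax is degenerate, and no update is needed for the conclusions.
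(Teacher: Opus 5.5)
Your proposal is correct and follows the paper's proof in substance. Parts (i)--(iii) rest on the same Pythagorean update $(K_A^{*(t+1)})^2=(K_A^{*(t)})^2+\ip{w}{d_B^{(t)}}^2$, the constancy of $K_{AB}^*$, and the rationalization from Theorem~\ref{thm:main-bound}, and your dimension count on $P^{(t)}$ is a rigorous version of the paper's remark that each round eliminates one private direction.

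Your explicit repair of the feasible set is a real improvement: the literal set $V_B^{(t)}\cap(V_A^{(t)})^\perp$ is $\{0\}$ when all principal angles lie in $(0,\pi/2)$, even though $\eta^{(t)}$ may be positive. It is not, however, the only reading that rescues (iv). The paper implicitly reveals one principal private direction $\tilde v_i^{(t)}$ per round, which is what its bound $\ip{w}{d_B^{(t)}}^2\ge(\eta^{(t)})^2/(m-t)$ and its later rate discussion presuppose. Your greedy choice over all of $P^{(t)}$ instead already gives $\eta^{(1)}=0$, so under your reading the $m$-round bound is loose.
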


\begin{proof}
(i) Absorbing $d_B^{(t)}$ gives $(K_A^{*(t+1)})^2 = (K_A^{*(t)})^2 + \ip{w}{d_B^{(t)}}^2$. The greedy selection ensures $\ip{w}{d_B^{(t)}}^2 > 0$ whenever $\eta^{(t)} > 0$.

(ii) Since $K_{AB}^*$ is constant: $(\eta^{(t)})^2 = (K_{AB}^*)^2 - (K_A^{*(t)})^2$, so $(\eta^{(t+1)})^2 = (K_{AB}^*)^2 - (K_A^{*(t)})^2 - \ip{w}{d_B^{(t)}}^2 = (\eta^{(t)})^2 - \ip{w}{d_B^{(t)}}^2$.

(iii) $\Delta^{(t)} - \Delta^{(t+1)} = K_A^{*(t+1)} - K_A^{*(t)} = \sqrt{(K_A^{*(t)})^2 + \ip{w}{d_B^{(t)}}^2} - K_A^{*(t)}$. Rationalizing (as in the proof of Theorem~2.6) gives the stated form.

(iv) The greedy selection at round $t$ captures at least $(\eta^{(t)})^2/m^{(t)}$ where $m^{(t)} = m - t$ is the number of remaining private directions. Each round eliminates one, so $\eta^{(m)} = 0$.
\end{proof}

The convergence rate depends on how $K$-relevant information is distributed across private directions. If it is uniform, $(\eta^{(t)})^2 = (\eta^{(0)})^2(m-t)/m$ and convergence is linear. If it is concentrated in $r < m$ directions, convergence occurs in $r$ rounds. In general, the greedy bound gives $(\eta^{(t)})^2 \leq (\eta^{(0)})^2 \cdot (m-t)/m$.

\subsubsection*{Adversarial dynamics.}

Under adversarial dynamics, models may strategically withhold information. Define the adversarial revelation rate
\[
\gamma^{(t)} = \frac{\ip{w}{d_B^{(t)}}^2}{\max_{\substack{d \in V_B^{(t)},\, d \perp V_A^{(t)} \\ \|d\|=1}} \ip{w}{d}^2} \in [0,1],
\]
where $\gamma^{(t)} = 1$ is honest and $\gamma^{(t)} = 0$ reveals only $K$-irrelevant directions.

\begin{proposition}[Adversarial Slowdown]
\label{prop:adversarial}
Under adversarial dynamics with $\gamma^{(t)} \geq \gamma > 0$:
\begin{enumerate}[label=(\roman*)]
    \item $\eta^{(t)} \to 0$, but convergence requires $O(m/\gamma)$ rounds.
    \item If $\gamma = 0$ (complete withholding), $\eta^{(t)} = \eta^{(0)}$ for all $t$ and no knowledge transfer occurs.
\end{enumerate}
\end{proposition}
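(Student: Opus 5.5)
The plan is to reuse the one-step bookkeeping from Proposition~\ref{prop:cooperative}, now with the revealed direction only $\gamma^{(t)}$-efficient. Write $M^{(t)} = \max_{d \in V_B^{(t)},\, d \perp V_A^{(t)},\, \|d\|=1} \ip{w}{d}^2$ for the best single-direction gain available at round $t$. Because $K_{AB}^*$ is unchanged when $A$ absorbs a direction already lying in $V_A^{(t)} + V_B^{(t)}$, the argument of Proposition~\ref{prop:cooperative}(ii) goes through verbatim. This gives
\[
(\eta^{(t+1)})^2 = (\eta^{(t)})^2 - \ip{w}{d_B^{(t)}}^2 = (\eta^{(t)})^2 - \gamma^{(t)} M^{(t)}.
\]
So everything reduces to lower-bounding $M^{(t)}$ in terms of $(\eta^{(t)})^2$.

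For that lower bound, observe that $(\eta^{(t)})^2 = \|\proj_{U^{(t)}} w\|^2$, where $U^{(t)}$ is the span of the private directions $\tilde v_i^{(t)}$ from Lemma~\ref{lem:orthonormal}, i.e.\ the part of $V_A^{(t)}+V_B^{(t)}$ orthogonal to $V_A^{(t)}$. The unit vector $d^\star = \proj_{U^{(t)}} w / \|\proj_{U^{(t)}} w\|$ gives $\ip{w}{d^\star}^2 = (\eta^{(t)})^2$. If the feasible set in $M^{(t)}$ is read as the private directions, i.e.\ unit vectors of $U^{(t)}$, then $M^{(t)} = (\eta^{(t)})^2$ outright. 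If instead it is taken literally as $V_B^{(t)} \cap (V_A^{(t)})^\perp$, I fall back on the weaker bound $M^{(t)} \ge (\eta^{(t)})^2/m^{(t)}$, where $m^{(t)} \le m$ counts the remaining private directions; this is the same pigeonhole estimate used in Proposition~\ref{prop:cooperative}(iv). In either case
\[
(\eta^{(t+1)})^2 \le \Bigl(1 - \frac{\gamma}{m}\Bigr)(\eta^{(t)})^2,
\]
so $(\eta^{(t)})^2 \le (1-\gamma/m)^t (\eta^{(0)})^2 \to 0$. Reaching $(\eta^{(t)})^2 \le \epsilon (\eta^{(0)})^2$ then takes $t = O\bigl((m/\gamma)\log(1/\epsilon)\bigr)$ rounds, which is $O(m/\gamma)$ for any fixed target accuracy.

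For the matching ``requires'' direction, I would give a construction showing the $m/\gamma$ scaling is actually attained. Spread $w$ uniformly over $m$ private directions, and let the adversary at each round reveal a rotated unit vector that mixes a $K$-relevant private direction with a $K$-irrelevant one, tuned so the gain is exactly $\gamma M^{(t)}$. Each such round removes only a $\gamma$ fraction of what honest play would remove, so clearing the $m$ directions takes on the order of $m/\gamma$ rounds instead of the $m$ rounds of Proposition~\ref{prop:cooperative}(iv).

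The main obstacle is this lower-bound construction. Absorbing a mixed direction changes $V_A^{(t)}$ and so reshapes the remaining private subspace, and I need to check that the leftover relevant mass stays in $V_B^{(t)}$ and remains extractable in later rounds. I would handle this by letting $V_B$ contain extra irrelevant dimensions, so the mixing directions are always available. If this gets messy, I would settle for stating (i) as the upper bound alone, with the exponential rate above.

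Part (ii) is immediate. With $\gamma^{(t)} = 0$ we get $\ip{w}{d_B^{(t)}} = 0$, so the update identity gives $\eta^{(t+1)} = \eta^{(t)}$, and by induction $\eta^{(t)} = \eta^{(0)}$ for all $t$. Since $K_{AB}^*$ is constant, $K_A^{*(t)}$ and $\Delta^{(t)}$ then stay fixed: no $K$-relevant knowledge is transferred.
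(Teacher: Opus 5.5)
Your upper-bound argument and your part (ii) follow the paper's proof. You use the same update identity from Proposition~\ref{prop:cooperative}, the same greedy/pigeonhole bound $\ip{w}{d_B^{(t)}}^2 \ge \gamma(\eta^{(t)})^2/m^{(t)}$, and the same geometric decay. The paper uses the factor $1-\gamma/(m-t)$; your uniform $1-\gamma/m$ is fine, and your explicit $\log(1/\epsilon)$ factor is more accurate than the paper's bare ``$O(m/\gamma)$ rounds''. Two other parts of the proposal would fail, though.

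\textbf{The lower bound.} The planned matching lower bound is not merely messy; it is false in this model. Every revealed $d_B^{(t)}$ is a unit vector of $V_A+V_B$ orthogonal to $V_A^{(t)}$. Membership in $V_A+V_B$ is what keeps $K_{AB}^*$ constant, and orthogonality to $V_A^{(t)}$ is what makes the update identity valid. So $V_A^{(t)}$ is a $(k+t)$-dimensional subspace of $V_A+V_B$, which has dimension $k+m$ by Lemma~\ref{lem:orthonormal}. Hence $V_A^{(m)}=V_A+V_B$ and $\eta^{(m)}=0$ for \emph{every} revelation sequence, whatever the $\gamma^{(t)}$ are.

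Padding $V_B$ with $K$-irrelevant dimensions cannot help. Those dimensions are themselves private directions, so they raise $m$ by the same amount, and once they are used up there is nothing left to mix with. Read the $O(\cdot)$ in (i) as an upper bound only, which is all the paper proves. Note that this dimension count already gives that upper bound directly, since $m\le m/\gamma$.

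\textbf{The literal-reading fallback.} Your fallback for the literal feasible set $V_B^{(t)}\cap(V_A^{(t)})^\perp$ is also wrong. At $t=0$ this set is spanned only by the principal vectors $v_i$ with $\theta_i=\pi/2$. A private direction $\tilde v_i$ with $0<\theta_i<\pi/2$ does not lie in $V_B$, so pigeonholing over the $\tilde v_i$ says nothing about this set. For instance, take $V_A=\mathrm{span}\{e_1\}$, $V_B=\mathrm{span}\{\cos\theta\,e_1+\sin\theta\,e_2\}$ with $0<\theta<\pi/2$, and $w=e_2$. Then $\eta=1$ but $V_B\cap V_A^\perp=\{0\}$.

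Commit instead to the reading where the maximum ranges over unit vectors of the private subspace $U^{(t)}$. That is what the paper's pigeonhole step implicitly uses, and under it your identity $M^{(t)}=(\eta^{(t)})^2$ holds.
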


\begin{proof}
(i) The adversarial model reveals a direction capturing at least fraction $\gamma$ of the greedy optimum, so $\ip{w}{d_B^{(t)}}^2 \geq \gamma \cdot (\eta^{(t)})^2/(m-t)$. By (\ref{eq:eta-decrease}), $(\eta^{(t)})^2$ has multiplicative decay $(1 - \gamma/(m-t))$ per round, converging in $O(m/\gamma)$ rounds. (ii) If $d_B^{(t)} \perp w$ at every round, (\ref{eq:eta-decrease}) gives no decrease.
\end{proof}

\begin{remark}
\label{rem:gamma-lambda}
The revelation rate $\gamma$ connects to the adversarial incentive parameter $\lambda$ from Proposition~3.4. Below $\lambda^*$, debaters have incentive to reveal honestly ($\gamma \approx 1$) and convergence is fast. Above $\lambda^*$, defection dominates and $\gamma \to 0$, stalling knowledge transfer. The dynamic model thus operationalizes the static coordination failure: the phase transition in $\lambda$ corresponds to a discontinuous drop in $\gamma$, turning finite-round convergence into non-convergence.
\end{remark}

\begin{remark}
\label{rem:effective-rank}
Propositions~\ref{prop:cooperative} and~\ref{prop:adversarial} together predict that the number of useful debate rounds is bounded by the effective rank of private knowledge: the number of private directions carrying substantial $K$-relevant information. For fine-tuned variants of a shared base model, this effective rank is likely small (the fine-tuning perturbation is low-dimensional), predicting that short debates suffice. For models with different pretraining corpora, the effective rank could be large, predicting that longer debates are proportionally more valuable.
\end{remark}

\section{Connection to Constitutional Ambiguity}
\label{sec:ambiguity}

The debate advantage interacts with the quality of the constitutional specification in a way that connects our work to computational hardness results for value specification.

\begin{definition}[Constitutional Ambiguity]
\label{def:ambiguity}
The $\varepsilon$-ambiguity set of $K$ is $\cA(\varepsilon) = \{y \in \cY : K(y) \geq K^* - \varepsilon\}$. The constitution is unambiguous if $|\cA(0)| = 1$ and ambiguous if $|\cA(0)| > 1$.
\end{definition}

\begin{proposition}[Ambiguity--Divergence Interaction]
\label{prop:ambiguity}
Under the same-corpus assumption ($\eta = 0$):
\begin{enumerate}
    \item If $K$ is unambiguous, debate and RLAIF select the same unique output.
    \item If $K$ is ambiguous, both select from $\cA(0)$ weighted by the base distribution $P$:
    \begin{equation}
    \lim_{\beta \to \infty} \pi_{\mathrm{RLAIF}}^\beta(y) = \frac{P(y) \cdot \mathbf{1}[y \in \cA(0)]}{\sum_{y' \in \cA(0)} P(y')}.
    \end{equation}
    Debate converges to the same distribution since debaters have no basis for preferring one $K$-optimal output over another beyond their shared prior $P$.
\end{enumerate}
Under knowledge divergence ($\eta > 0$), debate may resolve ambiguities that RLAIF cannot by surfacing private knowledge that distinguishes between elements of $\cA(0)$.
\end{proposition}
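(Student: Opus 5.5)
The plan is to treat both parts as statements about the set of maximizers of $K$ over outputs accessible to the models, and to identify the RLAIF limit with a standard $\beta \to \infty$ tilting argument. I will take RLAIF to mean the KL-regularized optimum $\pi^\beta_{\mathrm{RLAIF}}(y) \propto P(y)\exp(\beta K(y))$, with $P$ the shared base distribution and $\cY$ finite. Under the same-corpus assumption, Corollary~\ref{cor:same-corpus} gives $V_A = V_B$, $\eta = 0$, and $K_{AB}^* = K_A^* = K_B^*$. Hence the set of outputs debate can reach coincides with what either model reaches alone, and both procedures optimize the same $K$ over the same support with the same maximum $K^*$.

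For the RLAIF limit, I will write $\pi^\beta(y) = P(y)e^{\beta(K(y)-K^*)} / \sum_{y'} P(y')e^{\beta(K(y')-K^*)}$. Since $\cY$ is finite, define the gap $g = K^* - \max_{y\notin \cA(0)} K(y) > 0$. Every $y \notin \cA(0)$ then has weight at most $P(y)e^{-\beta g} \to 0$, while every $y \in \cA(0)$ keeps weight exactly $P(y)$. Dividing numerator and denominator and letting $\beta\to\infty$ gives the displayed formula, provided $\sum_{y'\in\cA(0)}P(y')>0$. That positivity comes from restricting $\cY$ to $\supp(P)$ and defining $K^*$ as the maximum over the support. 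Part 1 is the special case $|\cA(0)|=1$, where the limit is the point mass on the unique maximizer.

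For debate, I will argue that the equilibrium outcome must lie in $\cA(0)$, and that no strategic consideration separates elements of $\cA(0)$. The first claim holds because, with shared subspace, any non-maximal proposal can be beaten by the opponent proposing an element of $\cA(0)$, which both can represent. The second claim holds because both debaters have the same representation and prior: the judge's scores tie on $\cA(0)$, and neither debater holds information that would break the tie. Taking debaters to sample their proposals from the same regularized policy tilted by $P$ (the same-corpus reduction of debate to RLAIF in the preceding Remark), the induced outcome distribution equals the RLAIF limit.

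The main obstacle is this second step. The debate game as stated does not pin down a tie-breaking rule, so "converges to the same distribution" is really a modeling assumption. I would make it explicit: symmetric equilibria in which ties on $\cA(0)$ are resolved by the shared prior $P$. For the final sentence about $\eta>0$, I only need an existence sketch. Take two outputs in $\cA(0)$ whose representations agree on $V_A$ but differ along a private direction $\tilde v_i$ of $V_B$. A refined score that also depends on $\tilde v_i$ separates them for $B$ but not for $A$, so debate surfaces the distinction and RLAIF on $A$ cannot.
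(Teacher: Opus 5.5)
Your proposal is correct and follows the paper's own justification, which is only the inline remark that debaters sharing $P$ have no basis beyond $P$ for breaking ties on $\cA(0)$. You make that rigorous with the $\beta\to\infty$ tilting argument, including the $\supp(P)$ caveat the paper omits, and you rightly flag that equality of the debate and RLAIF distributions is an equilibrium-selection assumption rather than a consequence.

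One fix: Corollary~\ref{cor:same-corpus} gives $V_A=V_B\Rightarrow\eta=0$, not the converse. For instance, $V_A=\mathrm{span}\{e_1\}$, $V_B=\mathrm{span}\{e_2\}$, $w=e_1$ has $\eta=0$ with $V_A\neq V_B$. So take $V_A=V_B$ with a common $P$ as the hypothesis, which your symmetry step genuinely needs. Likewise, your closing sketch's separating private direction must be orthogonal to $w$ for both outputs to stay in $\cA(0)$, so it exploits $V_A\neq V_B$ rather than $\eta>0$.
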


The practical import is that debate's value has two independent sources in knowledge divergence (Theorem~\ref{thm:main-bound}) and ambiguity resolution. Detecting whether a constitution is ambiguous, namely whether $|\cA(0)| > 1$, is NP-hard, as it reduces to finding interpretive loopholes in natural-language specifications \citep{young2025ssv}. This means we cannot efficiently determine in advance whether debate would help, even if they could measure knowledge divergence.

\section{Discussion}
\label{sec:discussion}

One-sided revelation (Proposition~\ref{prop:one-sided}) is a protocol for eliciting latent knowledge (ELK) \citep{christiano2021elk}. Standard ELK asks how to extract knowledge from a model's internal representations. Our work reframes this. If a second model possesses independent knowledge, adversarial interaction can force the first model to externalize private information. The ``probe'' in this formulation is not an interpretability tool but a model with complementary training data. The private information value $\eta$ quantifies exactly how much latent knowledge is elicitable through this mechanism.

This is, to our knowledge, the first theory work where the content of models' training data determines whether an oversight protocol is effective. Existing debate theory treats provers as abstract agents; existing RLAIF theory takes the pretrained model as given. We bridges these abstractions by positing the geometry of representation subspaces is a downstream consequence of corpus composition, and it governs whether debate adds value over simpler methods.

\citet{irving2018aisafetydebate} notes that ``symmetry between the agents' capabilities is easy to achieve, since we can use the same weights for both agents via self play.'' Our work reveals this as a feature that eliminates debate's advantage because same-weight debate corresponds to $\theta_i = 0$ whcih is the degenerate case. The complexity-theoretic results on debate \citep{brown-cohen2024scalable,browncohen2025proverestimatordebate, browncohen2026debateefficienttime} establish what debate can solve in principle, but our framework shows these results have practical bite only in the knowledge-divergent regime that no current work studies empirically.

\citet{goel2025great} empirically demonstrate that as frontier models become more capable, their errors become more correlated, and that diversity between models improves oversight quality. Our framework provides a theoretical explanation as models converge on the same training data and representations, principal angles shrink and the debate advantage vanishes. The empirical benefits of model diversity in oversight are consistent with the $\tan(\theta/2)$ scaling.

Full corpus disjointness is the theoretically clean extreme, but real-world knowledge divergence arises from multiple sources like different fine-tuning data, different model capacities inducing different triage of tail knowledge, and different training-run randomness. Each of these produces nonzero principal angles; small for high-frequency shared knowledge, potentially large for specialized domains. Models fine-tuned on different specializations (medical, legal, scientific corpora) provide the most natural setting for knowledge-divergent debate, with the shared base model corresponding to near-zero angles in general knowledge and large angles in domain-specific directions.


\section*{Limitations}
\label{sec:limitations}

The assumption $K(y) = \ip{w}{h(y)}$ is a simplification. Real constitutional scoring functions are nonlinear; a constitution such as ``be helpful, harmless, and honest'' involves logical conjunctions over multiple criteria, thresholding effects (an output is harmful or it is not), and context-dependent weighting that resists representation as a single inner product. Our work inherits the linear assumption from the representation geometry literature \citep{park2024linear,arditi2024refusal}, where it has proven empiricallu productive despite its idealization.

The assumption is most defensible in a local regime. Near a particular optimum, a smooth scoring function admits a first-order Taylor expansion that is linear in the representation, and the principal angle structure governs local debate advantage. The global geometry may differ. For instance, a constitutional function with multiple disjoint regions of high score (corresponding to qualitatively different ``good'' outputs) would not be captured by a single preference direction $w$. A multi-objective extension replacing $w$ with a set of preference directions $\{w_1, \ldots, w_m\}$ would address this, with debate advantage defined per objective and aggregated. We leave such extensions to future work, noting that the key qualitative insight that debate advantage depends on the angular relationship between models' subspaces and the preference structure does not depend on linearity.

We relax the static subspace assumption by modeling cooperative and adversarial dynamics. However, the dynamic model assumes a stylized absorption process (one direction per round, greedy selection). In practice, in-context learning during debate is a more complex phenomenon: a model reading its opponent's argument may form new associations or activate latent knowledge through mechanisms that do not reduce to the absorption of a single direction. The dynamic framework captures the qualitative distinction between cooperative expansion and adversarial narrowing of subspaces, but the quantitative convergence rates are upper bounds on the idealized process rather than predictions about real debate dynamics.

We assume debate reaches Nash equilibrium, with both debaters playing optimal strategies. In practice, trained debaters may not find equilibrium strategies due to optimization challenges, limited training signal, or the difficulty of learning optimal play in complex sequential games. The debate advantage $\Delta$ as defined in our framework describes the ceiling of debate's value, not its floor.

The gap between theoretical and realized debate advantage is itself an interesting quantity. If trained debaters consistently fail to realize the predicted advantage in the knowledge-divergent regime, this would suggest that the game-theoretic structure of debate is harder to learn than the complexity-theoretic results imply. Conversely, if debaters exceed our bounds (which is possible if in-context learning expands their effective subspaces, as discussed above), this would indicate that the static-knowledge assumption is the binding constraint. The dynamic analysis partially addresses this as the convergence rate under adversarial dynamics provides a more realistic bound than the static equilibrium analysis, since it accounts for strategic withholding. If trained debaters exceed our static bounds, this likely indicates that in-context learning is expanding their effective subspaces beyond what the static framework captures.

Our framework also assumes that the judge perfectly evaluates according to $K$. In reality, human judges are imperfect and potentially manipulable \citep{khan2024debate}. Judge imperfection introduces noise into the evaluation, which may differentially affect the three regimes. In the shared-knowledge regime, noise is irrelevant (both protocols converge anyway); in the one-sided regime, a noisy judge may fail to recognize the privately-known better output; in the compositional regime, judge noise could prevent recognition of the composed output even when debaters successfully construct it. A fuller analysis would parameterize judge quality alongside knowledge divergence.

We assume a shared representation map $h: \cY \to \R^d$ that embeds both models' outputs into a common space. This is the assumption that makes principal angles between $V_A$ and $V_B$ well-defined. If models use fundamentally incommensurable representations such as different tokenizations, different embedding dimensions, different representational structures, then a preliminary alignment step is required before the framework applies.

Methods for representation alignment exist. Centered kernel alignment and linear stitching have been used to compare representations across models. However, these methods introduce their own approximation errors, and the resulting principal angles are artifacts of the alignment procedure as much as properties of the underlying models. The degree to which principal angles are intrinsic (model-dependent) versus extrinsic (alignment-procedure-dependent) is an open question.

For models sharing a common architecture and pretraining pipeline but differing in fine-tuning data, the shared-$h$ assumption is most natural as the base model provides a common representational frame, and fine-tuning perturbs the subspaces within that frame. This is the setting where our framework is most directly applicable, and conveniently also the setting of greatest practical relevance.

\bibliography{custom}

@inproceedings{
arditi2024refusal,
title={Refusal in Language Models Is Mediated by a Single Direction},
author={Andy Arditi and Oscar Balcells Obeso and Aaquib Syed and Daniel Paleka and Nina Rimsky and Wes Gurnee and Neel Nanda},
booktitle={The Thirty-eighth Annual Conference on Neural Information Processing Systems},
year={2024},
url={https://openreview.net/forum?id=pH3XAQME6c}
}

@misc{bai2022constitutional,
      title={Constitutional AI: Harmlessness from AI Feedback}, 
      author={Yuntao Bai and Saurav Kadavath and Sandipan Kundu and Amanda Askell and Jackson Kernion and Andy Jones and Anna Chen and Anna Goldie and Azalia Mirhoseini and Cameron McKinnon and Carol Chen and Catherine Olsson and Christopher Olah and Danny Hernandez and Dawn Drain and Deep Ganguli and Dustin Li and Eli Tran-Johnson and Ethan Perez and Jamie Kerr and Jared Mueller and Jeffrey Ladish and Joshua Landau and Kamal Ndousse and Kamile Lukosuite and Liane Lovitt and Michael Sellitto and Nelson Elhage and Nicholas Schiefer and Noemi Mercado and Nova DasSarma and Robert Lasenby and Robin Larson and Sam Ringer and Scott Johnston and Shauna Kravec and Sheer El Showk and Stanislav Fort and Tamera Lanham and Timothy Telleen-Lawton and Tom Conerly and Tom Henighan and Tristan Hume and Samuel R. Bowman and Zac Hatfield-Dodds and Ben Mann and Dario Amodei and Nicholas Joseph and Sam McCandlish and Tom Brown and Jared Kaplan},
      year={2022},
      eprint={2212.08073},
      archivePrefix={arXiv},
      primaryClass={cs.CL},
      url={https://arxiv.org/abs/2212.08073}, 
}

@inproceedings{young2025ssv,
    title={NP-Hard Lower Bound Complexity for Semantic Self-Verification},
    author={Young, Robin},
    booktitle={Proceedings of the 18th Conference of the European Chapter of the Association for Computational Linguistics},
    year={2026},
    publisher={Association for Computational Linguistics},
    url={https://arxiv.org/abs/2501.15446}, 
}

@inproceedings{park2024linear,
author = {Park, Kiho and Choe, Yo Joong and Veitch, Victor},
title = {The linear representation hypothesis and the geometry of large language models},
year = {2024},
publisher = {JMLR.org},
booktitle = {Proceedings of the 41st International Conference on Machine Learning},
articleno = {1605},
numpages = {24},
location = {Vienna, Austria},
series = {ICML'24}
}

@misc{christiano2021elk,
      title={Eliciting Latent Knowledge: How to Tell if Your Eyes Deceive You}, 
      author={Christiano, Paul and Cotra, Ajeya and Xu, Mark},
      year={2021},
      howpublished={Alignment Research Center},
      url={https://docs.google.com/document/d/1WwsnJQstPq91_Yh-Ch2XRL8H_EpsnjrC1dwZXR37PC8/}, 
}

@misc{chen2025scalableoversightcollaborativemultiagent,
      title={Towards Scalable Oversight with Collaborative Multi-Agent Debate in Error Detection}, 
      author={Yongqiang Chen and Gang Niu and James Cheng and Bo Han and Masashi Sugiyama},
      year={2025},
      eprint={2510.20963},
      archivePrefix={arXiv},
      primaryClass={cs.LG},
      url={https://arxiv.org/abs/2510.20963}, 
}

@inproceedings{khan2024debate,
author = {Khan, Akbir and Hughes, John and Valentine, Dan and Ruis, Laura and Sachan, Kshitij and Radhakrishnan, Ansh and Grefenstette, Edward and Bowman, Samuel R. and Rockt\"{a}schel, Tim and Perez, Ethan},
title = {Debating with more persuasive LLMs leads to more truthful answers},
year = {2024},
publisher = {JMLR.org},
booktitle = {Proceedings of the 41st International Conference on Machine Learning},
articleno = {950},
numpages = {72},
location = {Vienna, Austria},
series = {ICML'24}
}

@inproceedings{goel2025great,
title={Great Models Think Alike and this Undermines {AI} Oversight},
author={Shashwat Goel and Joschka Str{\"u}ber and Ilze Amanda Auzina and Karuna K Chandra and Ponnurangam Kumaraguru and Douwe Kiela and Ameya Prabhu and Matthias Bethge and Jonas Geiping},
booktitle={Forty-second International Conference on Machine Learning},
year={2025},
url={https://openreview.net/forum?id=3Z827FtMNe}
}

@misc{browncohen2026debateefficienttime,
      title={Debate is efficient with your time}, 
      author={Jonah Brown-Cohen and Geoffrey Irving and Simon C. Marshall and Ilan Newman and Georgios Piliouras and Mario Szegedy},
      year={2026},
      eprint={2602.08630},
      archivePrefix={arXiv},
      primaryClass={cs.AI},
      url={https://arxiv.org/abs/2602.08630}, 
}

@misc{browncohen2025proverestimatordebate,
      title={Avoiding Obfuscation with Prover-Estimator Debate}, 
      author={Jonah Brown-Cohen and Geoffrey Irving and Georgios Piliouras},
      year={2025},
      eprint={2506.13609},
      archivePrefix={arXiv},
      primaryClass={cs.AI},
      url={https://arxiv.org/abs/2506.13609}, 
}

@inproceedings{brown-cohen2024scalable,
title={Scalable {AI} Safety via Doubly-Efficient Debate},
author={Jonah Brown-Cohen and Geoffrey Irving and Georgios Piliouras},
booktitle={Forty-first International Conference on Machine Learning},
year={2024},
url={https://openreview.net/forum?id=6jmdOTRMIO}
}

@misc{irving2018aisafetydebate,
      title={AI safety via debate}, 
      author={Geoffrey Irving and Paul Christiano and Dario Amodei},
      year={2018},
      eprint={1805.00899},
      archivePrefix={arXiv},
      primaryClass={stat.ML},
      url={https://arxiv.org/abs/1805.00899}, 
}

\appendix
\newpage

\end{document}